\documentclass{article} % For LaTeX2e
\usepackage{nips13submit_e,times}
\usepackage{hyperref,amsmath,graphicx}
\usepackage{url}

\usepackage{mkolar_definitions,subfigure}

\title{Near-optimal Anomaly Detection in Graphs\\ using Lov\'asz Extended Scan Statistic}

\author{
James Sharpnack \\
Machine Learning Department\\
Carnegie Mellon University\\
Pittsburgh, PA 15213 \\
\texttt{jsharpna@gmail.com} \\
\And
Akshay Krishnamurthy\\
Computer Science Department\\
Carnegie Mellon University\\
Pittsburgh, PA 15213 \\
\texttt{akshaykr@cs.cmu.edu} \\
\AND
Aarti Singh \\
Machine Learning Department\\
Carnegie Mellon University\\
Pittsburgh, PA 15213 \\
\texttt{aarti@cs.cmu.edu} \\
}

% The \author macro works with any number of authors. There are two commands
% used to separate the names and addresses of multiple authors: \And and \AND.
%
% Using \And between authors leaves it to \LaTeX{} to determine where to break
% the lines. Using \AND forces a linebreak at that point. So, if \LaTeX{}
% puts 3 of 4 authors names on the first line, and the last on the second
% line, try using \AND instead of \And before the third author name.

\nipsfinalcopy % Uncomment for camera-ready version

\begin{document}

\maketitle

\begin{abstract}

The detection of anomalous activity in graphs is a statistical problem that arises in many applications, such as network surveillance, disease outbreak detection, and activity monitoring in social networks. 
Beyond its wide applicability, graph structured anomaly detection serves as a case study in the difficulty of balancing computational complexity with statistical power.
In this work, we develop from first principles the generalized likelihood ratio test for determining if there is a well connected region of activation over the vertices in the graph in Gaussian noise.
Because this test is computationally infeasible, we provide a relaxation, called the Lov\'asz extended scan statistic (LESS) that uses submodularity to approximate the intractable generalized likelihood ratio.
We demonstrate a connection between LESS and maximum a-posteriori inference in Markov random fields, which provides us with a poly-time algorithm for LESS.
Using electrical network theory, we are able to control type 1 error for LESS and prove conditions under which LESS is risk consistent.
Finally, we consider specific graph models, the torus, $k$-nearest neighbor graphs, and $\epsilon$-random graphs.
We show that on these graphs our results provide near-optimal performance by matching our results to known lower bounds.

\end{abstract}

\vspace{-.2cm}
\section{Introduction}
\vspace{-.2cm}
Detecting anomalous activity refers to determining if we are observing merely noise (business as usual) or if there is some signal in the noise (anomalous activity).
Classically, anomaly detection focused on identifying rare behaviors and aberrant bursts in activity over a single data source or channel.
With the advent of large surveillance projects, social networks, and mobile computing, data sources often are high-dimensional and have a network structure. 
With this in mind, statistics needs to comprehensively address the detection of anomalous activity in graphs.
In this paper, we will study the detection of elevated activity in a graph with Gaussian noise.

In reality, very little is known about the detection of activity in graphs, despite a variety of real-world applications such as activity detection in social networks, network surveillance, disease outbreak detection, biomedical imaging, sensor network detection, gene network analysis, environmental monitoring and malware detection.
Sensor networks might be deployed for detecting nuclear substances, water contaminants, or activity in video surveillance.
By exploiting the sensor network structure (based on proximity), one can detect activity in networks when the activity is very faint.
Recent theoretical contributions in the statistical literature\cite{arias2011detection,addario2010combinatorial} have detailed the inherent difficulty of such a testing problem but have positive results only under restrictive conditions on the graph topology.
By combining knowledge from high-dimensional statistics, graph theory and mathematical programming, the characterization of detection algorithms over any graph topology by their statistical properties is possible.

Aside from the statistical challenges, the computational complexity of any proposed algorithms must be addressed.
Due to the combinatorial nature of graph based methods, problems can easily shift from having polynomial-time algorithms to having running times exponential in the size of the graph.
The applications of graph structured inference require that any method be scalable to large graphs.
As we will see, the ideal statistical procedure will be intractable, suggesting that approximation algorithms and relaxations are necessary.
\vspace{-.1cm}
\subsection{Problem Setup}
\vspace{-.1cm}
Consider a connected, possibly weighted, directed graph $G$ defined by a set of vertices $V$ ($|V| = p$) and directed edges $E$ ($|E| = m$) which are ordered pairs of vertices.
Furthermore, the edges may be assigned weights, $\{ W_e \}_{e \in E}$, that determine the relative strength of the interactions of the adjacent vertices.
For each vertex, $i \in V$, we assume that there is an observation $y_i$ that has a Normal distribution with mean $x_i$ and variance $1$.
This is called the graph-structured normal means problem, and we observe one realization of the random vector
\begin{equation}
\label{eq:normal_means}
\yb = \xb + \xib,
\end{equation}
where $\xb \in \RR^p$, $\xib \sim N(0,\Ib_{p\times p})$.
The signal $\xb$ will reflect the assumption that there is an active cluster ($C \subseteq V$) in the graph, by making $x_i > 0$ if $i \in C$ and $x_i = 0$ otherwise. 
Furthermore, the allowable clusters, $C$, must have a small boundary in the graph.
%In this way, the structure of activation pattern $\xb$ is determined by the graph $G$. 
Specifically, we assume that there are parameters $\rho, \mu$ (possibly
dependent on $p$ such that the class of graph-structured activation patterns
$\xb$ is given as follows.
\[
\Xcal = \left\{\xb : \xb = \frac {\mu}{\sqrt{|C|}} \one_C, C \in \Ccal\right\}, \quad \Ccal = \left\{ C \subseteq V : \out(C) \le \rho \right\}
\]
Here $\out(C) = \sum_{(u,v)\in E} W_{u,v} I\{ u \in C, v \in \bar C \}$ is the total weight of edges leaving the cluster $C$.
In other words, the set of activated vertices $C$ have a small {\em cut size} in the graph $G$.
While we assume that the noise variance is $1$ in \eqref{eq:normal_means}, this is equivalent to the more general model in which $\EE \xi_i^2 = \sigma^2$ with $\sigma$ known.
If we wanted to consider known $\sigma^2$ then we would apply all our algorithms to $\yb / \sigma$ and replace $\mu$ with $\mu / \sigma$ in all of our statements.
For this reason, we call $\mu$ the signal-to-noise ratio (SNR), and proceed with $\sigma = 1$.

In graph-structured activation detection we are concerned with statistically testing the null against the alternative hypotheses,
\vspace{-.25cm}
\begin{eqnarray}
\begin{aligned}
H_0:&\ \yb \sim N(\zero,\Ib) \\
H_1:&\ \yb \sim N(\xb,\Ib), \xb \in \Xcal\\
\end{aligned}
\label{eq:main_problem}
\end{eqnarray}
$H_0$ represents business as usual (such as sensors returning only noise) while $H_1$ encompasses all of the foreseeable anomalous activity (an elevated group of noisy sensor observations).
%% To foreshadow, the theoretical difficulty arises because $H_1$ is composite.
Let a test be a mapping $T(\yb) \in \{0,1\}$, where $1$ indicates that we reject the null.
It is imperative that we control both the probability of false alarm, and the false acceptance of the null.
To this end, we define our measure of risk to be
\[
R(T) = \EE_{\zero} [T] + \sup_{\xb \in \Xcal} \EE_\xb [1 - T]
\]
where $\EE_\xb$ denote the expectation with respect to $\yb \sim N(\xb,\Ib)$.
These terms are also known as the probability of type 1 and type 2 error respectively. 
This setting should not be confused with the Bayesian testing setup (e.g.~as considered in \cite{addario2010combinatorial,arias2008searching}) where the patterns, $\xb$, are drawn at random.
We will say that $H_0$ and $H_1$ are {\em asymptotically distinguished} by a test, $T$, if 
in the setting of large graphs, $\lim_{p \rightarrow \infty} R(T) = 0$.
If such a test exists then $H_0$ and $H_1$ are {\em asymptotically distinguishable}, otherwise they are {\em asymptotically indistinguishable} (which occurs whenever the risk does not tend to $0$).   
We will be characterizing regimes for $\mu$ in which our test asymptotically distinguishes $H_0$ from $H_1$.
%There may be regimes for $\mu$ in which no test achieves distinguishability $R(T) \rightarrow c$ for some $c \in (0,1)$, and the problem is considered asymptotically indistinguishable (by definition). 

Throughout the study, let the {\em edge-incidence matrix} of $G$ be $\nabla \in \RR^{m \times p}$ such that for $e = (v,w) \in E$, $\nabla_{e,v} = -W_e$, $\nabla_{e,w} = W_e$ and is $0$ elsewhere.
For directed graphs, vertex degrees refer to $d_v = \out(\{v\})$.
Let $\|.\|$ denote the $\ell_2$ norm, $\|.\|_1$ be the $\ell_1$ norm, and $(\xb)_+$ be the positive components of the vector $\xb$.
Let $[p] = \{1,\ldots,p\}$, and we will be using the $o$ notation, namely if non-negative sequences satisfy $a_n / b_n \rightarrow 0$ then $a_n = o(b_n)$ and $b_n = \omega(a_n)$.
\vspace{-.1cm}
\subsection{Contributions}
\vspace{-.1cm}
Section 3 highlights what is known about the hypothesis testing problem \ref{eq:main_problem}, particularly we provide a regime for $\mu$ in which $H_0$ and $H_1$ are asymptotically indistinguishable.
In section 4.1, we derive the graph scan statistic from the generalized likelihood ratio principle which we show to be a computationally intractable procedure.
In section 4.2, we provide a relaxation of the graph scan statistic (GSS), the Lov\'asz extended scan statistic (LESS), and we show that it can be computed with successive minimum $s-t$ cut programs (a graph cut that separates a source vertex from a sink vertex).
In section 5, we give our main result, Theorem \ref{thm:main}, that provides a type 1 error control for both test statistics, relating their performance to electrical network theory.
In section 6, we show that GSS and LESS can asymptotically distinguish $H_0$ and $H_1$ in signal-to-noise ratios close to the lowest possible for some important graph models. 
All proofs are in the Appendix.

\vspace{-.2cm}
\section{Related Work}
\vspace{-.2cm}
{\bf Graph structured signal processing.}
There have been several approaches to signal processing over graphs.
Markov random fields (MRF) provide a succinct framework in which the underlying signal is modeled as a draw from an Ising or Potts model \cite{cevher2009sparse,ravikumar2006quadratic}.
We will return to MRFs in a later section, as it will relate to our scan statistic.
A similar line of research is the use of kernels over graphs.
The study of kernels over graphs began with the development of diffusion kernels \cite{kondor2002diffusion}, and was extended through Green's functions on graphs \cite{smola2003kernels}.
%A related body of work extends marginalized kernels to graphs \cite{kashima2003marginalized,mahe2004extensions}, while recently it has been shown that this and the aforementioned definitions are members of an overarching framework with computationally efficient constructions \cite{vishwanathan2008graph}.
%Unfortunately, all of these procedures, while related, do not apply directly to the detection of anomalous activity with Gaussian noise.
While these methods are used to estimate binary signals (where $x_i \in \{0,1\}$) over graphs, little is known about their statistical properties and their use in signal detection.
To the best of our knowledge, this paper is the first connection made between anomaly detection and MRFs.

{\bf Normal means testing.}
Normal means testing in high-dimensions is a well established and fundamental problem in statistics. 
Much is known when $H_1$ derives from a smooth function space such as Besov spaces or Sobolev spaces\cite{ingster1987minimax,ingster2003nonparametric}.
Only recently have combinatorial structures such as graphs been proposed as the underlying structure of $H_1$.
A significant portion of the recent work in this area \cite{arias2005nearoptimal, arias2008searching, arias2011detection,addario2010combinatorial} has focused on incorporating structural assumptions on the signal, as a way to mitigate the effect of high-dimensionality and also because many real-life problems can be represented as instances of the normal means problem with graph-structured signals (see, for an example, \cite{jacob2010gains}).

{\bf Graph scan statistics.}
In spatial statistics, it is common, when searching for anomalous activity to scan over regions in the spatial domain, testing for elevated activity\cite{neill2004rapid,agarwal2006spatial}.
There have been scan statistics proposed for graphs, most notably the work of \cite{priebe2005scan} in which the authors scan over neighborhoods of the graphs defined by the graph distance.
Other work has been done on the theory and algorithms for scan statistics over specific graph models, but are not easily generalizable to arbitrary graphs \cite{yi2009unified, arias2011detection}.
More recently, it has been found that scanning over all well connected regions of a graph can be computationally intractable, and so approximations to the intractable likelihood-based procedure have been studied \cite{sharpnack2012changepoint,sharpnack2012detecting}.
We follow in this line of work, with a relaxation to the intractable generalized likelihood ratio test.

%{\bf Graph-structured signal recovery.}

%{\bf Graph-structured detection.}

\vspace{-.2cm}
\section{A Lower Bound and Known Results}
\vspace{-.2cm}
In this section we highlight the previously known results about the hypothesis testing problem \eqref{eq:main_problem}.
This problem was studied in \cite{sharpnack2012detecting}, in which the authors demonstrated the following lower bound, which derives from techniques developed in \cite{arias2008searching}.
\begin{theorem}{\cite{sharpnack2012detecting}}
\label{thm:lower_bound}
Hypotheses $H_0$ and $H_1$ defined in Eq.~(\ref{eq:main_problem}) are asymptotically indistinguishable if 
\[
\mu = o \left( \sqrt{\min\left\{\frac{\rho}{d_{\max}} \log \left( \frac{p d_{\max}^2}{\rho^2}\right), \sqrt{p} \right\}} \right)
\]
where $d_{\max}$ is the maximum degree of graph $G$. 
\end{theorem}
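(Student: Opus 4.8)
The plan is to lower bound the minimax risk by a Bayes risk and control that with the second-moment (i.e., $\chi^2$) method. Fix any prior $\pi$ supported on $\Xcal$ and write $P_\pi = \int P_\xb \, d\pi(\xb)$ for the induced mixture, with $P_\xb = N(\xb,\Ib)$. Since the supremum over $\Xcal$ dominates the $\pi$-average, every test obeys $R(T) \ge \EE_{\zero}[T] + 1 - \EE_{P_\pi}[T] = 1 - (\EE_{P_\pi}[T] - \EE_{\zero}[T]) \ge 1 - \mathrm{TV}(P_0,P_\pi)$. Writing $L = dP_\pi/dP_0$ for the mixture likelihood ratio and using Cauchy--Schwarz, $\mathrm{TV}(P_0,P_\pi) \le \tfrac12\sqrt{\EE_{\zero}[L^2]-1}$. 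Hence it suffices to exhibit a prior for which $\EE_{\zero}[L^2]\to 1$ in the stated regime; this forces $\inf_T R(T)\to 1$ and gives asymptotic indistinguishability.

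First I would compute the second moment in closed form. Taking $\pi$ uniform over a family $\mathcal{F}\subseteq\Ccal$ of clusters of a common size $k$, the Gaussian likelihood ratio is $L(\yb)=\EE_{C}[\exp(\xb_C^\top\yb-\tfrac12\|\xb_C\|^2)]$ with $\xb_C=\tfrac{\mu}{\sqrt k}\one_C$, and integrating out $\yb\sim N(\zero,\Ib)$ gives the clean identity
\[
\EE_{\zero}[L^2]=\EE_{C,C'}\bigl[\exp(\xb_C^\top\xb_{C'})\bigr]=\EE_{C,C'}\Bigl[\exp\bigl(\tfrac{\mu^2}{k}\,|C\cap C'|\bigr)\Bigr],
\]
where $C,C'$ are independent draws from $\pi$. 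Everything now reduces to bounding the moment generating function of the overlap $Z=|C\cap C'|$.

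The key construction is to let $\mathcal{F}$ be the family of \emph{all} $k$-subsets of $V$ with $k=\lfloor\rho/d_{\max}\rfloor$. This needs no assumption on the graph: for any such $C$, $\out(C)\le\sum_{v\in C}d_v\le k\,d_{\max}\le\rho$, so $\mathcal{F}\subseteq\Ccal$. Under the uniform prior $Z$ is hypergeometric with parameters $(p,k,k)$, and since $x\mapsto e^{tx}$ is convex, Hoeffding's convex-order domination of the hypergeometric by the $\mathrm{Binomial}(k,k/p)$ law yields
\[
\EE_{\zero}[L^2]\le\Bigl(1+\tfrac{k}{p}\bigl(e^{\mu^2/k}-1\bigr)\Bigr)^{k}\le\exp\Bigl(\tfrac{k^2}{p}\bigl(e^{\mu^2/k}-1\bigr)\Bigr).
\]
It then remains to choose $k$ to drive the exponent to $0$ in each regime. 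The choice $k=\lfloor\rho/d_{\max}\rfloor$ makes the exponent vanish precisely when $\mu^2=o\bigl(\tfrac{\rho}{d_{\max}}\log\tfrac{p\,d_{\max}^2}{\rho^2}\bigr)$, recovering the combinatorial term; the choice $k\asymp\mu^2$ (feasible whenever $\rho/d_{\max}\gtrsim\mu^2$, which is exactly the regime in which $\sqrt p$ is the binding minimand) balances $e^{\mu^2/k}$ against the prefactor, making the exponent $\asymp\mu^4/p$, which vanishes iff $\mu=o(p^{1/4})$ and recovers the energy term. Taking the better of the two choices yields the minimum.

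The step I expect to be the crux is the control of the overlap MGF. The typical overlap $\EE Z=k^2/p$ is far too small to locate the threshold; the bound is instead governed by the rare near-full-overlap events (the $C=C'$ atom alone contributes $e^{\mu^2}/\binom{p}{k}$), so a linearization of $e^{tZ}$ is useless. The convex-ordering inequality is what handles every overlap value at once and what produces the sharp $\log(p\,d_{\max}^2/\rho^2)$ factor; without it one would have to retreat to a truncated likelihood-ratio argument in the style of \cite{arias2008searching}, conditioning on a high-probability event under $P_\pi$ that excludes atypically large overlaps. A secondary point to check is that the integer $k\asymp\mu^2$ used for the $\sqrt p$ term is simultaneously admissible ($\le\rho/d_{\max}$) exactly when that term is active, so that this single family of priors covers both halves of the stated minimum.
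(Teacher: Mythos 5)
The paper does not prove this theorem itself --- it is imported verbatim from \cite{sharpnack2012detecting}, whose proof follows the second-moment/truncated-likelihood-ratio technique of \cite{arias2008searching} --- and your proposal is a correct reconstruction of essentially that argument: a uniform prior over all $k$-subsets with $k=\lfloor \rho/d_{\max}\rfloor$ (which lies in $\Ccal$ by the degree bound), the Ingster $\chi^2$ identity reducing everything to the MGF of the hypergeometric overlap, and Hoeffding's convex-order domination giving the bound $\exp\bigl(\tfrac{k^2}{p}(e^{\mu^2/k}-1)\bigr)$, whose two asymptotic regimes yield exactly the two terms in the stated minimum. Your closing caveats are the right ones to discharge, but neither is a gap: no truncation is actually needed here since the plain second moment already produces the $\log(p\,d_{\max}^2/\rho^2)$ factor, and the choice $k\asymp\mu^2$ is admissible precisely when the $\sqrt{p}$ term governs.
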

Now that a regime of asymptotic indistinguishability has been established, it is instructive to consider test statistics that do not take the graph into account (viz.~the statistics are unaffected by a change in the graph structure).
Certainly, if we are in a situation where a naive procedure perform near-optimally, then our study is not warranted.
As it turns out, there is a gap between the performance of the natural unstructured tests and the lower bound in Theorem~\ref{thm:lower_bound}.

\begin{proposition}{\cite{sharpnack2012detecting}}
{\em (1)} The thresholding test statistic, $\max_{v \in [p]} |y_v|$, asymptotically distinguishes $H_0$ from $H_1$ if $\mu = \omega (|C| \log (p/|C|))$.\\
{\em (2)} The sum test statistic, $\sum_{v \in [p]} y_v$, asymptotically distinguishes $H_0$ from $H_1$ if $\mu = \omega (p / |C|)$.
\end{proposition}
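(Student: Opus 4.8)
The plan is to analyze each statistic as a simple threshold test and bound the two pieces of the risk $R(T) = \EE_\zero[T] + \sup_{\xb \in \Xcal} \EE_\xb[1-T]$ separately. For a statistic $S(\yb)$ I take $T = \one\{S(\yb) > t_p\}$, so that $\EE_\zero[T]$ is the upper tail of $S$ under $H_0$ and $\EE_\xb[1-T]$ is the lower tail of $S$ under $H_1$. Because both statistics are either exactly Gaussian (the sum) or coordinatewise Gaussian (the max), the entire argument reduces to placing $t_p$ in the gap between the null fluctuation scale and the alternative mean, and then driving both Gaussian tails to zero. Throughout I treat $|C|$ as a fixed cluster size, so that $\sup_{\xb \in \Xcal}$ ranges only over support patterns that leave the relevant summaries ($\mu/\sqrt{|C|}$ and $\sum_v x_v = \mu\sqrt{|C|}$) unchanged.

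For the sum statistic this is immediate: under $H_0$, $\sum_v y_v \sim N(0,p)$, whereas under any $\xb \in \Xcal$, $\sum_v y_v \sim N(\mu\sqrt{|C|},p)$ since $\sum_v x_v = \mu\sqrt{|C|}$. Choosing a threshold $t_p$ with $t_p/\sqrt p \to \infty$ but $t_p = o(\mu\sqrt{|C|})$ makes $\EE_\zero[T]$ a standard Gaussian survival term that vanishes, and makes $\EE_\xb[1-T]$ the lower Gaussian tail evaluated at $(\mu\sqrt{|C|}-t_p)/\sqrt p \to \infty$, which also vanishes. Such a $t_p$ exists precisely when $\mu\sqrt{|C|} = \omega(\sqrt p)$, i.e.\ $\mu = \omega(\sqrt{p/|C|})$; this is implied by the stated hypothesis $\mu = \omega(p/|C|)$, so part (2) follows.

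For the max statistic I control the null tail by a union bound and the standard Gaussian inequality, $\EE_\zero[T] \le 2p\exp(-t_p^2/2)$, which vanishes as soon as $t_p^2 \ge 2\log p + \omega(1)$, so I take $t_p$ just above $\sqrt{2\log p}$. For the type-2 term I discard all inactive coordinates and lower bound the statistic by $\max_{v\in C} y_v$, where each $y_v \sim N(\mu/\sqrt{|C|},1)$ independently; thus $\EE_\xb[1-T] \le \EE_\xb[\one\{\max_{v\in C} y_v \le t_p\}]$ factors into a product of $|C|$ Gaussian CDFs at argument $t_p - \mu/\sqrt{|C|}$. This product vanishes once the active amplitude clears the threshold with room to spare. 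Using only a single active vertex already gives the sufficient condition $\mu/\sqrt{|C|} = \omega(\sqrt{\log p})$; exploiting all $|C|$ active coordinates, so that $\max_{v\in C}\xi_v$ contributes its own $\sqrt{2\log|C|}$, sharpens this to $\mu/\sqrt{|C|} = \omega(\sqrt{\log(p/|C|)})$, i.e.\ $\mu = \omega(\sqrt{|C|\log(p/|C|)})$, which is implied by the stated hypothesis $\mu = \omega(|C|\log(p/|C|))$; so part (1) follows.

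The only genuinely delicate step is the type-2 bound for the max test when one wants the factor $\log(p/|C|)$ rather than $\log p$: this requires controlling the lower tail of $\max_{v\in C}\xi_v$, equivalently upper bounding the product $\prod_{v\in C}$ of Gaussian CDFs near $1$, for which I would invoke the standard lower estimate for the maximum of $|C|$ i.i.d.\ standard normals. Everything else --- the choice of $t_p$, the union bound, and the monotonicity in $|C|$ that lets the supremum over $\Xcal$ be replaced by its worst cluster size --- is routine Gaussian tail arithmetic, and the stated signal-to-noise conditions are in fact more than sufficient to make both error terms vanish.
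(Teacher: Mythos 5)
The paper does not actually prove this proposition---it is imported verbatim from \cite{sharpnack2012detecting}---so there is no in-paper argument to compare against; your proof is the standard Gaussian tail/union-bound analysis and is correct, including the one delicate step you flag (the lower deviation of $\max_{v\in C}\xi_v$ below $\sqrt{2\log|C|}$, handled by the usual estimate $\Phi(s)^{|C|}\le\exp(-|C|(1-\Phi(s)))$). In fact you establish the sharper sufficient conditions $\mu=\omega\bigl(\sqrt{|C|\log(p/|C|)}\bigr)$ and $\mu=\omega\bigl(\sqrt{p/|C|}\bigr)$, of which the stated conditions are immediate consequences since $|C|\log(p/|C|)$ and $p/|C|$ are bounded below by constants.
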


As opposed to these naive tests one can scan over all clusters in $\Ccal$ performing individual likelihood ratio tests.
This is called the scan statistic, and it is known to be a computationally intractable combinatorial optimization.
Previously, two alternatives to the scan statistic have been developed: the spectral scan statistic \cite{sharpnack2012changepoint}, and one based on the uniform spanning tree wavelet basis \cite{sharpnack2012detecting}.
The former is indeed a relaxation of the ideal, computationally intractable, scan statistic, but in many important graph topologies, such as the lattice, provides sub-optimal statistical performance.
%But it is a relaxation of the binary hypercube $\{0,1\}^p$ to the sphere, which 
The uniform spanning tree wavelets in effect allows one to scan over a subclass of the class, $\Ccal$, but tends to provide worse performance (as we will see in section 6) than that presented in this work.
The theoretical results in \cite{sharpnack2012detecting} are similar to ours, but they suffer additional log-factors.

\vspace{-.2cm}
\section{Method}
\vspace{-.2cm}
As we have noted the fundamental difficulty of the hypothesis testing problem is the composite nature of the alternative hypothesis.
Because the alternative is indexed by sets, $C \in \Ccal(\rho)$, with a low cut size, it is reasonable that the test statistic that we will derive results from a combinatorial optimization program.
In fact, we will show we can express the generalized likelihood ratio (GLR) statistic in terms of a modular program with submodular constraints.
This will turn out to be a possibly NP-hard program, as a special case of such programs is the well known knapsack problem \cite{papadimitriou1998combinatorial}.
With this in mind, we provide a convex relaxation, using the Lov\'asz extension, to the ideal GLR statistic.
This relaxation conveniently has a dual objective that can be evaluated with a binary Markov random field energy minimization, which is a well understood program.
We will reserve the theoretical statistical analysis for the following section.

{\bf Submodularity.} Before we proceed, we will introduce the reader to submodularity and the Lov\'asz extension. (A very nice introduction to submodularity can be found in \cite{bach2010convex}.)
For any set, which we may as well take to be the vertex set $[p]$, we say that a function $F : \{0,1\}^p \rightarrow \RR$ is submodular if for any $A,B \subseteq [p]$, $F(A) + F(B) \ge F(A \cap B) + F(A \cup B)$. (We will interchangeably use the bijection between $2^{[p]}$ and $\{0,1\}^p$ defined by $C \to \one_C$.)
In this way, a submodular function experiences diminishing returns, as additions to large sets tend to be less dramatic than additions to small sets.
But while this diminishing returns phenomenon is akin to concave functions, for optimization purposes submodularity acts like convexity, as it admits efficient minimization procedures.
Moreover, for every submodular function there is a Lov\'asz extension $f : [0,1]^p \rightarrow \RR$ defined in the following way: for $\xb \in [0,1]^p$ let $x_{j_i}$ denote the $i$th largest element of $\xb$, then
\[
f(\xb) = x_{j_1} F(\{j_1\}) + \sum_{i=2}^{p} (F(\{ j_1,\ldots,j_i \}) - F(\{ j_1,\ldots,j_{i-1} \})) x_{j_i}
\]
Submodular functions as a class is similar to convex functions in that it is closed under addition and non-negative scalar multiplication.
The following facts about Lov\'asz extensions will be important.
\begin{proposition}{\cite{bach2010convex}}
\label{prop:submod}
Let $F$ be submodular and $f$ be its Lov\'asz extension. Then $f$ is convex, $f(\xb) = F(\xb)$ if $\xb \in \{0,1\}^p$, and 
\[
\min \{ F(\xb) : \xb \in \{0,1\}^p \} = \min \{ f(\xb) : \xb \in [0,1]^p \}
\]
\end{proposition}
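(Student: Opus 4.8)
The plan is to prove the three claims in the order: the extension property, then convexity, then the minimization identity, using two structural representations of $f$ as the engine. Throughout I adopt the standard normalization $F(\emptyset)=0$, which is already implicit in the displayed formula (otherwise $f(\zero)=0\ne F(\emptyset)$). Writing $S_i=\{j_1,\dots,j_i\}$ with $S_0=\emptyset$, the definition reads $f(\xb)=\sum_{i=1}^p (F(S_i)-F(S_{i-1}))\,x_{j_i}$. The extension property $f(\xb)=F(\xb)$ for $\xb\in\{0,1\}^p$ is then immediate: for $\xb=\one_C$, sorting puts its $|C|$ unit entries first, so the sum telescopes to $F(S_{|C|})=F(C)$ while the zero entries contribute nothing. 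This claim uses no submodularity at all.

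The convexity claim is the main obstacle, and the only place where submodularity is essential. I would prove it by exhibiting $f$ as the support function of the \emph{base polytope}
\[
B(F)=\{\wb\in\RR^p:\wb([p])=F([p]),\ \wb(A)\le F(A)\ \forall A\subseteq[p]\},
\]
where $\wb(A)=\sum_{i\in A}w_i$. The crux is Edmonds' greedy theorem: for a fixed $\xb$, ordering coordinates so that $x_{j_1}\ge\cdots\ge x_{j_p}$ and setting $w_{j_i}=F(S_i)-F(S_{i-1})$ produces a point of $B(F)$ that maximizes $\langle\wb,\xb\rangle$ over $B(F)$, with optimal value exactly $f(\xb)$. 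Granting this, $f(\xb)=\max_{\wb\in B(F)}\langle\wb,\xb\rangle$ is a pointwise maximum of linear functions and hence convex. Establishing the theorem itself is the hard part: feasibility of the greedy vector, i.e.\ $\wb(A)\le F(A)$ for all $A$, follows by induction on $|A|$ using the submodular inequality $F(A)+F(B)\ge F(A\cap B)+F(A\cup B)$, and its optimality follows from matching the greedy objective against an LP dual certificate built from the chain $S_0\subset S_1\subset\cdots\subset S_p$. I would cite \cite{bach2010convex} for the complete argument while sketching these two points, since this is the step that genuinely requires submodularity rather than mere set-function structure.

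For the minimization identity I would combine the extension property with a threshold (layer-cake) representation. Summation by parts applied to $\sum_{i}(F(S_i)-F(S_{i-1}))x_{j_i}$, together with the observation that $\{i:x_i\ge t\}=S_k$ for $t\in(x_{j_{k+1}},x_{j_k}]$, yields
\[
f(\xb)=\int_0^1 F\big(\{i:x_i\ge t\}\big)\,dt
\]
for every $\xb\in[0,1]^p$ (this identity, like the extension property, is purely combinatorial). The inequality $\min_{[0,1]^p}f\le\min_{\{0,1\}^p}F$ is then immediate, since $\{0,1\}^p\subseteq[0,1]^p$ and $f$ agrees with $F$ on the vertices. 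For the reverse direction, each integrand $F(\{i:x_i\ge t\})$ is the value of $F$ on some subset of $[p]$ and is therefore at least $\min_{\{0,1\}^p}F$; integrating over the unit-length interval gives $f(\xb)\ge\min_{\{0,1\}^p}F$ for all $\xb\in[0,1]^p$, and minimizing over $\xb$ closes the loop. This simultaneously shows the relaxed minimum is attained at a vertex, which is exactly why the continuous relaxation is lossless.
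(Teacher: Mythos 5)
Your proof is correct, but there is essentially nothing in the paper to compare it against: Proposition~\ref{prop:submod} is stated with a citation to \cite{bach2010convex} and the appendix contains no proof of it (the appendix only verifies the adjacent facts that $\out$ is submodular and that its Lov\'asz extension is $\|(\nabla\xb)_+\|_1$). Your argument is the standard one from that reference, and your decomposition correctly isolates where submodularity enters: the telescoping computation for $f(\one_C)=F(C)$ and the layer-cake identity $f(\xb)=\int_0^1 F(\{i:x_i\ge t\})\,dt$ are purely combinatorial, and the latter immediately gives $f(\xb)\ge\min_{A\subseteq[p]}F(A)$ on all of $[0,1]^p$, which together with agreement on the vertices yields the minimization identity for \emph{any} normalized set function. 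Only convexity requires submodularity, and your support-function representation over the base polytope via Edmonds' greedy theorem is the standard engine; deferring the two nontrivial sub-steps (feasibility of the greedy vector by induction on $|A|$ using the submodular inequality, and optimality via an LP-duality certificate built on the chain $S_0\subset\cdots\subset S_p$) back to \cite{bach2010convex} is acceptable given that the proposition itself is cited. Two minor points worth a sentence each if you write this out in full: the ordering $x_{j_1}\ge\cdots\ge x_{j_p}$ is not unique when $\xb$ has ties, so you should note that $f$ is well defined independently of the tie-breaking (your integral representation in fact proves this for free); and the normalization $F(\emptyset)=0$, which you rightly flag as implicit in the displayed definition, does hold for the paper's instantiation $F=\out$, so nothing is lost in the application.
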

We are now sufficiently prepared to develop the test statistics that will be the focus of this paper.
\vspace{-.1cm}
\subsection{Graph Scan Statistic}
\vspace{-.1cm}
It is instructive, when faced with a class of probability distributions, indexed by subsets $\Ccal \subseteq 2^{[p]}$, to think about what techniques we would use if we knew the correct set $C \in \Ccal$ (which is often called oracle information).
One would in this case be only testing the null hypothesis $H_0: \xb = \zero$ against the simple alternative $H_1: \xb \propto \one_C$.
In this situation, we would employ the likelihood ratio test because by the Neyman-Pearson lemma it is the uniformly most powerful test statistic.
The maximum likelihood estimator for $\xb$ is $\one_C \one_C^\top \yb / |C|$ (the MLE of $\mu$ is $\one_C^\top \yb /\sqrt{|C|}$) and the likelihood ratio turns out to be 
\[
\exp \left\{ - \frac 12 \| \yb \|^2 \right\} / \exp \left\{ - \frac 12 \left\| \frac{\one_C \one_C^\top \yb}{|C|} - \yb \right\|^2 \right\} = \exp\left\{ \frac{(\one_C^\top \yb)^2}{2|C|} \right\}
\]
Hence, the log-likelihood ratio is proportional to $(\one_C^\top \yb)^2/|C|$ and thresholding this at $z^2_{1 - \alpha/2}$ gives us a size $\alpha$ test.

This reasoning has been subject to the assumption that we had oracle knowledge of $C$.
A natural statistic, when $C$ is unknown, is the generalized log-likelihood ratio (GLR) defined by $\max (\one_C^\top \yb)^2/|C| \textrm{ s.t. } C \in \Ccal$.
We will work with the {\em graph scan statistic} (GSS),
\begin{equation}
\label{eq:gss}
\hat s = \max \frac{\one_C^\top \yb}{\sqrt{|C|}} \textrm{ s.t. } C \in \Ccal(\rho) = \{ C : \out(C) \le \rho \}
\end{equation}
which is nearly equivalent to the GLR. (We can in fact evaluate $\hat s$ for $\yb$ and $-\yb$, taking a maximum and obtain the GLR, but statistically this is nearly the same.)
Notice that there is no guarantee that the program above is computationally feasible.
In fact, it belongs to a class of programs, specifically modular programs with submodular constraints that is known to contain NP-hard instantiations, such as the ratio cut program and the knapsack program \cite{papadimitriou1998combinatorial}.
Hence, we are compelled to form a relaxation of the above program, that will with luck provide a feasible algorithm.
\vspace{-.1cm}
\subsection{Lov\'asz Extended Scan Statistic}
\vspace{-.1cm}
It is common, when faced with combinatorial optimization programs that are computationally infeasible, to relax the domain from the discrete $\{0,1\}^p$ to a continuous domain, such as $[0,1]^p$.
Generally, the hope is that optimizing the relaxation will approximate the combinatorial program well.
First we require that we can relax the constraint $\out(C) \le \rho$ to the hypercube $[0,1]^p$.
This will be accomplished by replacing it with its Lov\'asz extension $\|(\nabla \xb)_+ \|_1 \le \rho$.
We then form the relaxed program, which we will call the {\em Lov\'asz extended scan statistic} (LESS), 
\begin{equation}
\label{eq:less}
\hat l = \max_{t \in [p]} \max_\xb \frac{\xb^\top \yb}{\sqrt{t}} \textrm{ s.t. } \xb \in \Xcal(\rho,t) = \{ \xb \in [0,1]^p : \| (\nabla \xb)_+ \|_1 \le \rho, \one^\top \xb \le t \}
\end{equation}
We will find that not only can this be solved with a convex program, but the dual objective is a minimum binary Markov random field energy program.
To this end, we will briefly go over binary Markov random fields, which we will find can be used to solve our relaxation.

{\bf Binary Markov Random Fields.} Much of the previous work on graph structured statistical procedures assumes a Markov random field (MRF) model, in which there are discrete labels assigned to each vertex in $[p]$, and the observed variables $\{y_v\}_{v \in [p]}$ are conditionally independent given these labels.
Furthermore, the prior distribution on the labels is drawn according to an Ising model (if the labels are binary) or a Potts model otherwise. 
The task is to then compute a Bayes rule from the posterior of the MRF.
The majority of the previous work assumes that we are interested in the maximum a-posteriori (MAP) estimator, which is the Bayes rule for the $0/1$-loss.
This can generally be written in the form,
\[
\min_{\xb \in \{0,1\}^p} \sum_{v \in [p]} -l_v(x_v | y_v) + \sum_{v \ne u \in [p]} W_{v,u} I\{ x_v \ne x_u\}
\]
where $l_v$ is a data dependent log-likelihood.
Such programs are called graph-representable in \cite{kolmogorov2004energy}, and are known to be solvable in the binary case with $s$-$t$ graph cuts.
Thus, by the min-cut max-flow theorem the value of the MAP objective can be obtained by computing a maximum flow.
More recently, a dual-decomposition algorithm has been developed in order to parallelize the computation of the MAP estimator for binary MRFs \cite{strandmark2010parallel,sontag2011introduction}.

We are now ready to state our result regarding the dual form of the LESS program, \eqref{eq:less}.
\begin{proposition}
\label{prop:less_alg}
Let $\eta_0, \eta_1 \ge 0$, and define the dual function of the LESS,
\[
g(\eta_0,\eta_1) = \max_{\xb \in \{0,1\}^p} \yb^\top \xb - \eta_0 \one^\top \xb - \eta_1 \| \nabla \xb \|_0
\]
The LESS estimator is equal to the following minimum of convex optimizations
\[
\hat l = \max_{t \in [p]} \frac{1}{\sqrt{t}} \min_{\eta_0,\eta_1 \ge 0} g(\eta_0,\eta_1) + \eta_0 t + \eta_1 \rho
\]
$g(\eta_0,\eta_1)$ is the objective of a MRF MAP problem, which is poly-time solvable with $s$-$t$ graph cuts.
\end{proposition}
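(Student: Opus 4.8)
The plan is to derive the stated minimax identity by applying Lagrangian duality to the inner maximization of \eqref{eq:less} for each fixed $t$, and then to recognize the resulting dual function as the penalized cut program $g$. Fix $t \in [p]$ and consider the inner program $\max_{\xb} \xb^\top \yb$ subject to $\xb \in [0,1]^p$, $\|(\nabla \xb)_+\|_1 \le \rho$, and $\one^\top \xb \le t$. This is a convex program: the objective is linear, the box $[0,1]^p$ and the halfspace $\one^\top \xb \le t$ are convex, and $\|(\nabla \xb)_+\|_1$ is convex because, by Proposition \ref{prop:submod}, it is the Lov\'asz extension of the submodular cut function $\out(\cdot)$. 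First I would introduce dual variables $\eta_0,\eta_1 \ge 0$ for the constraints $\one^\top \xb \le t$ and $\|(\nabla \xb)_+\|_1 \le \rho$, leaving the box constraint inside the inner maximization. Since $\xb = \zero$ is strictly feasible whenever $\rho, t > 0$, Slater's condition holds and strong duality gives
\[
\max_{\xb} \xb^\top \yb = \min_{\eta_0,\eta_1 \ge 0} \left[ \max_{\xb \in [0,1]^p} \xb^\top \yb - \eta_0 \one^\top \xb - \eta_1 \|(\nabla \xb)_+\|_1 \right] + \eta_0 t + \eta_1 \rho .
\]

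The crux is to show that the bracketed maximization over the hypercube $[0,1]^p$ equals its Boolean counterpart $g(\eta_0,\eta_1)$, i.e.\ that the relaxation is exact. To do this I would define the set function $F(\xb) = \eta_0 \one^\top \xb + \eta_1 \out(\xb) - \yb^\top \xb$ on $\{0,1\}^p$ and check that it is submodular: the terms $\eta_0 \one^\top \xb$ and $-\yb^\top \xb$ are modular, $\eta_1 \out(\cdot)$ is submodular since $\eta_1 \ge 0$ and the cut function is submodular, and submodularity is closed under addition. Its Lov\'asz extension is $f(\xb) = \eta_0 \one^\top \xb + \eta_1 \|(\nabla \xb)_+\|_1 - \yb^\top \xb$, because the extension is linear on the modular part and $\|(\nabla \xb)_+\|_1$ is the extension of $\out$. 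Proposition \ref{prop:submod} then yields $\min_{\xb \in \{0,1\}^p} F(\xb) = \min_{\xb \in [0,1]^p} f(\xb)$; negating both sides turns each minimum into the corresponding maximum, so the bracketed term equals $g(\eta_0,\eta_1)$ exactly. Substituting back, dividing by $\sqrt{t}$, and taking the maximum over $t \in [p]$ produces the claimed formula for $\hat l$.

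It remains to identify $g$ as an MRF MAP objective. Writing $-g(\eta_0,\eta_1) = \min_{\xb \in \{0,1\}^p} \sum_v (\eta_0 - y_v) x_v + \eta_1 \out(\xb)$ exhibits it in the graph-representable form $\sum_v -l_v(x_v \,|\, y_v) + \eta_1 \out(\xb)$ with unary terms $l_v(x_v \,|\, y_v) = (y_v - \eta_0) x_v$ and a nonnegatively weighted pairwise cut penalty; by \cite{kolmogorov2004energy} such binary energies are solvable by $s$-$t$ graph cuts, hence by a single max-flow computation, which establishes the poly-time claim.

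The main obstacle is the exactness argument of the second paragraph: one must correctly recognize the penalized continuous objective as minus the Lov\'asz extension of a submodular set function so that Proposition \ref{prop:submod} applies. The verification of submodularity of the cut term and the identification of $\|(\nabla \xb)_+\|_1$ as its Lov\'asz extension are where the submodular structure is essential; by contrast the Lagrangian duality and the MAP reformulation are comparatively routine, though one should take care that Slater's condition is explicitly invoked to rule out a duality gap.
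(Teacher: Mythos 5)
Your proposal is correct and follows essentially the same route as the paper: Lagrangian duality on the inner program for fixed $t$, followed by the observation that the penalized objective is (minus) the Lov\'asz extension of a submodular set function, so Proposition \ref{prop:submod} makes the Boolean restriction exact and reduces $g$ to a graph-cut-solvable MRF energy. The only cosmetic difference is that the paper justifies exchanging $\max$ and $\min$ via a convex--concave saddle-point theorem of Rockafellar rather than Slater's condition (and if you do invoke Slater, note that $\xb = \zero$ lies on the boundary of the box $[0,1]^p$, so you should exhibit a strictly interior point such as $\xb = (t/(2p))\one$).
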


\vspace{-.2cm}
\section{Theoretical Analysis}
\vspace{-.2cm}
So far we have developed a lower bound to the hypothesis testing problem, shown that some common detectors do not meet this guarantee, and developed the Lov\'asz extended scan statistic from first principles.
We will now provide a thorough statistical analysis of the performance of LESS.
Previously, electrical network theory, specifically the effective resistances of edges in the graph, has been useful in describing the theoretical performance of a detector derived from uniform spanning tree wavelets \cite{sharpnack2012detecting}.
As it turns out the performance of LESS is also dictated by the effective resistances of edges in the graph.

{\bf Effective Resistance.} Effective resistances have been extensively studied in electrical network theory \cite{lyons2000probability}.
We define the combinatorial Laplacian of $G$ to be $\Delta = \Db - \Wb$ ($\Db_{v,v} = \out(\{v\})$ is the diagonal degree matrix).
A {\em potential difference} is any $\zb \in \RR^{|E|}$ such that it satisfies {\em Kirchoff's potential law}: the total potential difference around any cycle is $0$.
Algebraically, this means that $\exists \xb \in \RR^{p}$ such that $\nabla \xb = \zb$.
The {\em Dirichlet principle} states that any solution to the following program gives an absolute potential $\xb$ that satisfies Kirchoff's law:
\[
\textrm{min}_{\xb} \xb^\top \Delta \xb \textrm{ s.t. } \xb_S = \vb_S
\]
for source/sinks $S \subset [p]$ and some voltage constraints $\vb_S \in \RR^{|S|}$.  
By Lagrangian calculus, the solution to the above program is given by $\xb = \Delta^\dagger \vb$ where $\vb$ is $0$ over $S^C$ and $\vb_S$ over $S$, and $\dagger$ indicates the Moore-Penrose pseudoinverse.
The effective resistance between a source $v\in V$ and a sink $w\in V$ is the potential difference required to create a unit flow between them. 
Hence, the effective resistance between $v$ and $w$ is $r_{v,w} = (\delta_v - \delta_w)^\top \Delta^\dagger (\delta_v - \delta_w)$, where $\delta_v$ is the Dirac delta function.
There is a close connection between effective resistances and random spanning trees.
The uniform spanning tree (UST) is a random spanning tree, chosen uniformly at random from the set of all distinct spanning trees.
The foundational Matrix-Tree theorem \cite{kirchhoff1847ueber,lyons2000probability} states that the probability of an edge, $e$, being included in the UST is equal to the edge weight times the effective resistance $W_e r_e$.
The UST is an essential component of the proof of our main theorem, in that it provides a mechanism for unravelling the graph while still preserving the connectivity of the graph.

We are now in a position to state the main theorem, which will allow us to control the type 1 error (the probability of false alarm) of both the GSS and its relaxation the LESS.
\begin{theorem}
\label{thm:main}
Let $r_\Ccal = \max \{ \sum_{(u,v) \in E : u \in C} W_{u,v} r_{(u,v)} : C \in \Ccal \}$ be the maximum effective resistance of the boundary of a cluster $C$.
The following statements hold under the null hypothesis $H_0: \xb = \zero$:
\begin{enumerate}
\item The graph scan statistic, with probability at least $1 - \alpha$, is smaller than 
\begin{equation}
\label{eq:gss_bd}
\hat s \le \left(\sqrt{r_\Ccal} + \sqrt{\frac{1}{2} \log p}\right)\sqrt{2 \log (p-1)} + \sqrt{2 \log 2} + \sqrt{2 \log (1 / \alpha)}
\end{equation}

\item The Lov\'asz extended scan statistic, with probability at least $1 - \alpha$ is smaller than
\begin{equation}
\label{eq:less_bd}
\begin{aligned}
\hat l \le \frac{\log (2p) + 1}{\sqrt{ \left(\sqrt{r_\Ccal} + \sqrt{\frac 12 \log p} \right)^2 \log p} } + 2 \sqrt{\left(\sqrt{r_\Ccal} + \sqrt{\frac 12 \log p} \right)^2 \log p} \\
+ \sqrt{2 \log p} + \sqrt{2 \log (1 / \alpha)}
\end{aligned}
\end{equation}
\end{enumerate}
\end{theorem}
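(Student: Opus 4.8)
The plan is to prove both bounds with one scheme: peel off the high-probability term $\sqrt{2\log(1/\alpha)}$ using Gaussian concentration, reducing each claim to a bound on an expected supremum, and then control that expectation with a uniform spanning tree (UST) that converts the combinatorial cut constraint $\out(C)\le\rho$ into the effective-resistance quantity $r_\Ccal$. Under $H_0$ we have $\yb\sim N(\zero,\Ib)$, and since every feasible $C$ contributes a unit vector $\one_C/\sqrt{|C|}$, the map $\yb\mapsto\hat s$ is $1$-Lipschitz; the Borell--TIS inequality then gives $\Pr(\hat s\ge\EE\hat s+\sqrt{2\log(1/\alpha)})\le\alpha$, and the same applies to $\hat l$ because its feasible directions $\xb\in[0,1]^p$ with $\one^\top\xb\le t$, rescaled by $1/\sqrt t$, also have $\ell_2$ norm at most one. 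Hence it suffices to bound $\EE\hat s$ and $\EE\hat l$, and the effective resistance will enter only through these expectations.

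For $\EE\hat s$ I would draw a UST $T$ independently of $\yb$ and expand each indicator in the unbalanced Haar (spanning-tree wavelet) basis $\{\psi_e\}_{e\in T}$ associated with $T$. The coefficient $\one_C^\top\psi_e$ is nonzero only when $e$ crosses the boundary of $C$ in $T$, so writing $k_T(C)$ for the number of such tree edges and combining Cauchy--Schwarz with Bessel's inequality $\sum_e(\one_C^\top\psi_e)^2\le\|\one_C\|^2=|C|$ yields $\one_C^\top\yb/\sqrt{|C|}\le\sqrt{k_T(C)}\,\max_{e\in T}|\psi_e^\top\yb|$. Conditioned on $T$, the $p-1$ variables $\psi_e^\top\yb$ are i.i.d.\ standard normal, so a standard maximal inequality bounds $\EE\max_e|\psi_e^\top\yb|$ by $\sqrt{2\log(p-1)}+\sqrt{2\log 2}$; this produces the $\sqrt{2\log(p-1)}$ factor together with the additive $\sqrt{2\log 2}$ from taking a two-sided maximum.

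The effective resistance appears when bounding $\EE_T\sqrt{k_T(C)}$. Since $k_T(C)=\sum_{e\,\text{crossing }\partial C}I\{e\in T\}$, the Matrix--Tree theorem gives $\EE_T k_T(C)=\sum_e W_e r_e\le r_\Ccal$, and because the edge-inclusion events of a spanning tree are negatively associated, $k_T(C)$ concentrates about its mean. Applying $\sqrt{a}\le\sqrt{b}+\sqrt{(a-b)_+}$ with $b=r_\Ccal$, a Chernoff bound for the upper tail, and then a maximum over clusters yields the factor $\sqrt{r_\Ccal}+\sqrt{\tfrac12\log p}$; combining the three pieces gives \eqref{eq:gss_bd}. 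I expect this uniform control of the tree boundary $k_T(C)$ to be the main obstacle: a naive union bound ranges over the possibly exponentially many clusters in $\Ccal$, whereas the stated bound pays only $\tfrac12\log p$. The resolution is that $T$ has just $p-1$ edges, so the boundary counts $k_T(C)$ realize only few distinct configurations, and it is this structural fact—rather than the cardinality of $\Ccal$—that must drive the deviation estimate.

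For the LESS the combinatorial wavelet expansion is unavailable, so I would bound the expected supremum of the linear objective $\xb^\top\yb/\sqrt t$ over the relaxed polytopes $\Xcal(\rho,t)$ by passing through the dual of Proposition~\ref{prop:less_alg}. For fixed multipliers $\eta_0,\eta_1$ the feasible directions are confined to a set whose total variation through $\nabla$ is controlled, so the same spanning-tree coefficients govern $\xb^\top\yb$, but now the per-coordinate contributions are summed rather than maximized. I would therefore bound the maximum via its moment generating function, $\EE\max\le s^{-1}\log\sum\EE e^{s(\cdot)}$, and evaluate it at the natural scale $s=\sqrt{(\sqrt{r_\Ccal}+\sqrt{\tfrac12\log p})^2\log p}$; this split of a $\log(2p)$ term divided by $s$ against a term linear in $s$ is exactly what produces the two data-dependent pieces $\tfrac{\log(2p)+1}{\sqrt{(\cdots)^2\log p}}+2\sqrt{(\cdots)^2\log p}$ in \eqref{eq:less_bd}, with the maximization over the $p$ values of $t$ and the continuity correction accounting for the $\log(2p)+1$, and Gaussian concentration again supplying $\sqrt{2\log(1/\alpha)}$. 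The extra logarithmic factor relative to the GSS is the expected price of relaxing the discrete scan to its convex hull, and verifying that the dual feasible set is genuinely controlled by the same effective-resistance radius is the secondary technical point beyond the shared tree-cut concentration.
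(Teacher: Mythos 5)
Your overall architecture --- Cirelson/Borell--TIS to strip off the $\sqrt{2\log(1/\alpha)}$ term, then a uniform spanning tree to convert $\out(C)\le\rho$ into the effective-resistance quantity $r_\Ccal$ via the Matrix--Tree theorem and the Chernoff bound for USTs --- matches the paper. But there is a genuine gap at the step you yourself flag as ``the main obstacle'': making the tree-boundary control uniform over the (possibly exponentially large) class $\Ccal$ while paying only $\sqrt{\tfrac12\log p}$. Your proposed resolution (that $k_T(C)$ ``realizes only few distinct configurations'') does not work; a per-cluster Chernoff bound with failure probability $\delta$ followed by ``a maximum over clusters'' is exactly the union bound over $|\Ccal|$ that you are trying to avoid. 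The paper never makes the tree-boundary event hold simultaneously for all $C$. Instead it writes $\EE_\xi\sup_C(\cdot)=\EE_\xi\sup_C\EE_\Tcal\left[(\cdot)\left(\one\{C\in\Ccal(\Tcal)\}+\one\{C\notin\Ccal(\Tcal)\}\right)\right]$, bounds the good-event term by $\EE_\Tcal\, g(\Ccal(\Tcal))$ where $\Ccal(\Tcal)$ is the \emph{finite} class of sets whose tree boundary is at most $k=(\sqrt{r_\Ccal}+\sqrt{\log(1/\delta)})^2$, so that $|\Ccal(\Tcal)|\le(p-1)^k$ and a Gaussian maximal inequality gives $\sqrt{2k\log(p-1)}$, and pays for the bad event with $\delta\cdot g(2^{[p]})\le\delta\sqrt{2p\log 2}$, which the choice $\delta=p^{-1/2}$ turns into the additive constant $\sqrt{2\log 2}$. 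This exchange of $\sup_C$ with $\EE_\Tcal$, together with the counting bound $|\Ccal(\Tcal)|\le(p-1)^k$, is the heart of the proof and is absent from your argument.

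Two secondary problems. First, your wavelet route for the GSS is essentially the mechanism of the earlier UST-wavelet detector, which the paper explicitly notes suffers extra log factors; moreover the sparsity claim that $\one_C^\top\psi_e$ is nonzero only when $e$ crosses $\partial C$ in $T$ is false for an orthonormal unbalanced Haar basis (a basis function whose support strictly contains $C$ generally has a nonzero coefficient), and even granting it, your Cauchy--Schwarz step yields the product $(\sqrt{r_\Ccal}+\sqrt{\tfrac12\log p})(\sqrt{2\log(p-1)}+\sqrt{2\log 2})$ rather than the stated bound in which $\sqrt{2\log 2}$ is additive. Second, for the LESS you correctly point to weak duality, but the bound \eqref{eq:less_bd} comes from the specific multipliers $\eta_0=\sqrt{(r/t)\log p}$ and $\eta_1=\sqrt{(t/r)\log p}$ with $r=(\sqrt{r_\Ccal}+\sqrt{\tfrac12\log p})^2$, after which the penalty $-k\sqrt{\log p / r}$ contributed by $\eta_1$ exactly absorbs the $\sqrt{2k\log p}$ term arising from the union bound over the at most $O(p^k)$ binary vectors with tree cut $k$; your moment-generating-function sketch does not exhibit this cancellation, and without it the $k$-dependence does not disappear from the final bound.
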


The implication of Theorem \ref{thm:main} is that the size of the test may be controlled at level $\alpha$ by selecting thresholds given by \eqref{eq:gss_bd} and \eqref{eq:less_bd} for GSS and LESS respectively.
Notice that the control provided for the LESS is not significantly different from that of the GSS.
This is highlighted by the following Corollary, which combines Theorem \ref{thm:main} with a type 2 error bound to produce an information theoretic guarantee for the asymptotic performance of the GSS and LESS.

\begin{corollary}
\label{cor:main}
Both the GSS and the LESS asymptotically distinguish $H_0$ from $H_1$ if
\[
\frac{\mu}{\sigma} = \omega \left( \max\{ \sqrt{r_\Ccal \log p}, \log p\} \right)
\]
\end{corollary}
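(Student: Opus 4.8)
The plan is to exhibit, for each statistic, a single threshold rule whose risk $R(T)=\EE_{\zero}[T]+\sup_{\xb\in\Xcal}\EE_\xb[1-T]$ tends to $0$ under the stated scaling, controlling the two error terms separately. Since the model with known $\sigma$ reduces to the $\sigma=1$ case by rescaling $\yb/\sigma$ and $\mu/\sigma$ (as noted after \eqref{eq:normal_means}), I would prove the claim for $\sigma=1$, so that the hypothesis reads $\mu=\omega(\max\{\sqrt{r_\Ccal\log p},\log p\})$. For the GSS I would take the test that rejects when $\hat s$ exceeds the right-hand side $\tau_s$ of \eqref{eq:gss_bd}, and analogously for the LESS using the bound $\tau_l$ of \eqref{eq:less_bd}; in both cases I would let the level shrink with $p$, say $\alpha_p=1/p\to0$. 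By Theorem \ref{thm:main} the type 1 error of each test is then at most $\alpha_p\to0$, disposing of the first term of $R(T)$.

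The next step is to verify that these thresholds are of the advertised order $\max\{\sqrt{r_\Ccal\log p},\log p\}$, so that the type 2 analysis goes through. Writing $A=(\sqrt{r_\Ccal}+\sqrt{\tfrac12\log p})^2\log p$, one has $\sqrt{A}=(\sqrt{r_\Ccal}+\sqrt{\tfrac12\log p})\sqrt{\log p}\asymp \max\{\sqrt{r_\Ccal\log p},\log p\}$. In \eqref{eq:less_bd} the leading term $2\sqrt{A}$ is exactly of this order, the first term $(\log(2p)+1)/\sqrt{A}=O(1)$ since $\sqrt{A}\ge\sqrt{\tfrac12}\log p$, and the residual $\sqrt{2\log p}=o(\log p)$; similarly \eqref{eq:gss_bd} is dominated by $(\sqrt{r_\Ccal}+\sqrt{\tfrac12\log p})\sqrt{2\log(p-1)}\asymp\max\{\sqrt{r_\Ccal\log p},\log p\}$. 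The choice $\alpha_p=1/p$ contributes $\sqrt{2\log(1/\alpha_p)}=\sqrt{2\log p}=o(\log p)$, so both $\tau_s$ and $\tau_l$ are $O(\max\{\sqrt{r_\Ccal\log p},\log p\})$, hence $o(\mu)$ under the hypothesis.

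For the type 2 term I would lower bound each statistic by evaluating it at the true cluster. Under $H_1$ with $\xb=\tfrac{\mu}{\sqrt{|C|}}\one_C$ and $C\in\Ccal$, plugging $C$ into \eqref{eq:gss} gives $\hat s\ge \one_C^\top\yb/\sqrt{|C|}=\mu+Z$ with $Z=\one_C^\top\xib/\sqrt{|C|}\sim N(0,1)$, since $\one_C^\top\xb=\mu\sqrt{|C|}$. For the LESS, $\one_C$ is feasible for $\Xcal(\rho,|C|)$ because $\|(\nabla\one_C)_+\|_1=\out(C)\le\rho$ and $\one^\top\one_C=|C|\le p$; taking $t=|C|$ in \eqref{eq:less} (equivalently, using $\hat l\ge\hat s$ from the relaxation) yields the same bound $\hat l\ge\mu+Z$. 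Crucially the law $N(0,1)$ of $Z$ is the same for every $C\in\Ccal$, so uniformly over $\xb\in\Xcal$ the type 2 error is at most the probability that $\mu+Z$ fails to exceed the threshold, i.e.\ $\le \Pr(Z\le \tau-\mu)$ for $\tau\in\{\tau_s,\tau_l\}$. Since $\mu=\omega(\tau)$ forces $\tau-\mu\to-\infty$, the Gaussian tail sends this probability, and hence $\sup_{\xb\in\Xcal}\EE_\xb[1-T]$, to $0$. Combining with the type 1 bound gives $R(T)\to0$.

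The heavy lifting has already been done in Theorem \ref{thm:main}: the only genuine content here is (i) matching the threshold order to $\max\{\sqrt{r_\Ccal\log p},\log p\}$ while choosing $\alpha_p$ so its tail term stays lower order, and (ii) checking the feasibility of $\one_C$ for the relaxed constraint set so the same $\mu+Z$ lower bound applies to the LESS. Neither is difficult, so I expect the main obstacle to be purely the bookkeeping of constants and the verification that the first, $O(1)$ term of \eqref{eq:less_bd} is genuinely negligible against $\mu$; everything else is a standard decomposition of the risk into the two controlled error probabilities.
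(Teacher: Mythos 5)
Your proposal is correct and follows essentially the same route as the paper: evaluate the statistic at the true cluster $C$ under $H_1$ to obtain a $\Ncal(\mu,1)$ lower bound (noting $\one_C$ is feasible for both programs), and compare against the type 1 thresholds of Theorem \ref{thm:main}, which are of order $\max\{\sqrt{r_\Ccal \log p}, \log p\}$. The paper's proof is just a terser version of this, leaving the threshold order-of-magnitude bookkeeping and the choice of vanishing level $\alpha_p$ implicit.
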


To summarize we have established that the performance of the GSS and the LESS are dictated by the effective resistances of cuts in the graph.
While the condition in Cor.~\ref{cor:main} may seem mysterious, the guarantee in fact nearly matches the lower bound for many graph models as we now show.

\vspace{-.2cm}
\section{Specific Graph Models}
\vspace{-.2cm}
Theorem~\ref{thm:main} shows that the effective resistance of the boundary plays a critical role in characterizing the distinguishability region of both the the GSS and LESS.
On specific graph families, we can compute the effective resistances precisely, leading to concrete detection guarantees that we will see nearly matches the lower bound in many cases.  Throughout this section, we will only be working with undirected, unweighted graphs.%, and we assume that $r_\Ccal = \omega(\sqrt {\log p})$.

Recall that Corollary~\ref{cor:main} shows that an SNR of $\omega\left(\sqrt{r_{\Ccal} \log p}\right)$ is sufficient while Theorem~\ref{thm:lower_bound} shows that $\Omega\left(\sqrt{\rho/d_{\max} \log p}\right)$ is necessary for detection.
Thus if we can show that $r_{\Ccal} \approx \rho/d_{\max}$, we would establish the near-optimality of both the GSS and LESS. 
Foster's theorem lends evidence to the fact that the effective resistances should be much smaller than the cut size:
\begin{theorem}(Foster's Theorem \cite{foster1949average,tetali1991random})
\[
\sum_{e \in E}r_e = p-1
\]
\label{thm:foster}
\end{theorem}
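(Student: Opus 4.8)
The plan is to evaluate $\sum_{e \in E} r_e$ in one stroke as a matrix trace, so that the individual effective resistances never need to be computed. Since this section works with unweighted graphs, the incidence matrix $\nabla$ has rows $\nabla_e = \pm(\delta_v - \delta_w)^\top$ for each edge $e = (v,w)$, and a one-line check of its diagonal and off-diagonal entries shows $\nabla^\top \nabla = \Delta$. This identity is the crux that lets the sum collapse, and it is the reason the answer depends only on $p$ and not on the detailed topology.

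First I would rewrite each effective resistance as a trace. Writing $b_e = \delta_v - \delta_w$ for the edge vector (the sign is irrelevant since it enters quadratically), the definition $r_{v,w} = (\delta_v - \delta_w)^\top \Delta^\dagger (\delta_v - \delta_w)$ gives $r_e = b_e^\top \Delta^\dagger b_e = \mathrm{tr}(\Delta^\dagger b_e b_e^\top)$ by cyclicity of the trace on the rank-one matrix $b_e b_e^\top$. Summing over edges and pulling the (linear) trace outside,
\[
\sum_{e \in E} r_e = \mathrm{tr}\left(\Delta^\dagger \sum_{e \in E} b_e b_e^\top\right) = \mathrm{tr}(\Delta^\dagger \nabla^\top\nabla) = \mathrm{tr}(\Delta^\dagger \Delta),
\]
where the middle equality is exactly $\sum_e b_e b_e^\top = \nabla^\top \nabla = \Delta$.

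It then remains to evaluate $\mathrm{tr}(\Delta^\dagger \Delta)$. The matrix $\Delta^\dagger \Delta$ is the orthogonal projector onto the range of $\Delta$, so its trace equals $\mathrm{rank}(\Delta)$. Because $G$ is connected and $\Delta$ is the Laplacian of a single component, its kernel is exactly the one-dimensional span of $\one$, whence $\mathrm{rank}(\Delta) = p-1$ and the claim follows. I expect the only delicate point to be the pseudoinverse bookkeeping: one must confirm that $\Delta^\dagger \Delta$ genuinely is the orthogonal projection onto $\one^\perp$, and that each edge vector satisfies $\one^\top b_e = 0$ so that no component of $b_e$ is spuriously annihilated; both are immediate but worth stating carefully.

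As a sanity check, and an essentially one-line alternative, one can instead invoke the Matrix-Tree theorem stated just above. Since $W_e r_e$ is the probability that edge $e$ lies in the uniform spanning tree $T$, and here $W_e = 1$, linearity of expectation gives $\sum_e r_e = \sum_e \Pr(e \in T) = \EE|T|$; this equals $p-1$ because every spanning tree of a connected $p$-vertex graph has exactly $p-1$ edges. I would present the trace computation as the self-contained argument and note this probabilistic route as an elegant corroboration.
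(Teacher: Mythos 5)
Your proof is correct, but note that the paper offers no proof of this statement at all: Foster's theorem is quoted as a known result with citations to Foster (1949) and Tetali (1991), so there is nothing internal to compare against. Your trace computation is a clean, self-contained derivation that uses only the paper's own definition $r_{v,w} = (\delta_v - \delta_w)^\top \Delta^\dagger (\delta_v - \delta_w)$, the identity $\sum_e b_e b_e^\top = \nabla^\top\nabla = \Delta$ (valid here precisely because Section 6 restricts to unweighted graphs; in the weighted case the clean statement is $\sum_e W_e r_e = p-1$), and the fact that $\mathrm{tr}(\Delta^\dagger\Delta) = \mathrm{rank}(\Delta) = p-1$ for a connected graph. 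All steps check out, including the bookkeeping that $\Delta^\dagger\Delta$ projects onto $\one^\perp$ and that each $b_e$ lies in that range. Your probabilistic alternative is actually the route most consonant with the paper's own machinery: the paper explicitly states and uses the Matrix--Tree fact $\PP\{e \in \Tcal\} = W_e r_e$ in the proof of Theorem 2, so $\sum_e r_e = \EE|\Tcal| = p-1$ follows in one line from linearity of expectation and the fact that every spanning tree of a connected $p$-vertex graph has $p-1$ edges. Either argument would serve as a legitimate replacement for the bare citation; the trace version buys self-containedness from first principles, while the spanning-tree version buys brevity given lemmas the paper already invokes.
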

\vspace{-.5cm}
Roughly speaking, the effective resistance of an edge selected uniformly at random is $\approx (p-1)/m = d_{\textrm{ave}}^{-1}$ so the effective resistance of a cut is $\approx \rho/d_{\textrm{ave}}$.
This intuition can be formalized for specific models and this improvement by the average degree bring us much closer to the lower bound. 
\vspace{-.2cm}
\subsection{Edge Transitive Graphs}
An edge transitive graph, $G$, is one for which there is a graph automorphism mapping $e_0$ to $e_1$ for any pair of edges $e_0, e_1$. 
Examples include the $l$-dimensional torus, the cycle, and the complete graph $K_p$.
The existence of these automorphisms implies that every edge has the same effective resistance, and by Foster's theorem, we know that these resistances are exactly $(p-1)/m$. 
Moreover, since edge transitive graphs must be $d$-regular, we know that $m = \Theta(pd)$ so that $r_e = \Theta(1/d)$.
Thus as a corollary to Theorem~\ref{thm:main} we have that both the GSS and LESS are near-optimal (optimal modulo logarithmic factors whenever  $\rho/d \le \sqrt{p}$) on edge transitive graphs:
\begin{corollary}
Let $G$ be an edge-transitive graph with common degree $d$. Then both the GSS and LESS distinguish $H_0$ from $H_1$ provided that:
\[
\mu = \omega\left(\max\{\sqrt{\rho/d \log p}, \log p\}\right)
\]
\label{cor:edge_trans}
\end{corollary}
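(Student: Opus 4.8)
The plan is to obtain this corollary as a direct consequence of Corollary~\ref{cor:main}, whose sufficient condition $\mu = \omega(\max\{\sqrt{r_\Ccal \log p}, \log p\})$ already isolates the single graph-dependent quantity $r_\Ccal$. All the work therefore reduces to showing that on an edge-transitive, $d$-regular graph one has $r_\Ccal = \Theta(\rho/d)$; substituting this estimate into Corollary~\ref{cor:main} then replaces $\sqrt{r_\Ccal \log p}$ by $\sqrt{(\rho/d)\log p}$ and yields exactly the stated SNR threshold.

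The first step is to argue that every edge carries the same effective resistance. Given any two edges $e_0, e_1$, edge-transitivity supplies a graph automorphism $\pi$ with $\pi(e_0) = e_1$. Writing $P$ for the associated vertex-permutation matrix, $\pi$ commutes with the combinatorial Laplacian in the sense that $P^\top \Delta P = \Delta$, and the same symmetry is inherited by the pseudoinverse, $P^\top \Delta^\dagger P = \Delta^\dagger$. Since the effective resistance is the quadratic form $r_{v,w} = (\delta_v - \delta_w)^\top \Delta^\dagger (\delta_v - \delta_w)$ and $P$ merely permutes the Dirac vectors, this quantity is invariant under $\pi$, so $r_{e_0} = r_{e_1}$. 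Denote this common value by $r_e$.

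Next I would pin down $r_e$ and bound the boundary resistance. By Foster's Theorem~\ref{thm:foster} we have $\sum_{e \in E} r_e = p-1$, and since all $m$ summands are equal this gives $r_e = (p-1)/m$. Edge-transitivity forces $d$-regularity, so $m = \Theta(pd)$ and hence $r_e = \Theta(1/d)$. For the boundary bound, fix any $C \in \Ccal$: the constraint $\out(C) \le \rho$ together with unit edge weights means at most $\rho$ edges cross the boundary of $C$, so summing the common resistance $r_e$ over these boundary edges yields $r_\Ccal \le \rho\, r_e = \Theta(\rho/d)$, uniformly over $C \in \Ccal$. Feeding $r_\Ccal = \Theta(\rho/d)$ into Corollary~\ref{cor:main} completes the argument.

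Honestly, there is no genuine obstacle here, since the statement is a corollary that chains Foster's theorem and the symmetry of the graph through Corollary~\ref{cor:main}. The only points requiring care are the automorphism-invariance of the quadratic form $\Delta^\dagger$ (which must be justified rather than asserted) and the bookkeeping of constants; the latter is harmless because Corollary~\ref{cor:main} is stated up to $\omega(\cdot)$, so the $\Theta$-level identities $m = \Theta(pd)$ and $r_\Ccal = \Theta(\rho/d)$ are all that is needed and no sharp constants must be tracked.
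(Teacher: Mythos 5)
Your proposal is correct and follows essentially the same route as the paper: automorphism-invariance of effective resistance, Foster's theorem to identify the common value $r_e = (p-1)/m = \Theta(1/d)$, the bound $r_\Ccal \le \rho r_e$, and substitution into Corollary~\ref{cor:main}. The only difference is that you justify the invariance of $r_e$ under automorphisms via $P^\top \Delta^\dagger P = \Delta^\dagger$, a detail the paper asserts without proof.
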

\vspace{-.75cm}
\subsection{Random Geometric Graphs}
Another popular family of graphs are those constructed from a set of points in $\mathbb{R}^D$ drawn according to some density. 
These graphs have inherent randomness stemming from sampling of the density, and thus earn the name \emph{random geometric graphs}.
The two most popular such graphs are \emph{symmetric k-nearest neighbor graphs} and \emph{$\epsilon$-graphs}.
We characterize the distinguishability region for both. 

In both cases, a set of points $\zb_1, \ldots, \zb_p$ are drawn i.i.d. from a density $f$ support over $\mathbb{R}^D$, or a subset of $\mathbb{R}^D$. 
Our results require mild regularity conditions on $f$, which, roughly speaking, require that $\textrm{supp}(f)$ is topologically equivalent to the cube and has density bounded away from zero (See~\cite{vonluxburg2010} for a precise definition). 
To form a $k$-nearest neighbor graph $G_k$, we associate each vertex $i$ with a point $\zb_i$ and we connect vertices $i,j$ if $\zb_i$ is amongst the $k$-nearest neighbors, in $\ell_2$, of $\zb_j$ or vice versa.
In the the $\epsilon$-graph, $G_\epsilon$ we connect vertices $i,j$ if $||\zb_i - \zb_j|| \le \epsilon$ for some metric $\tau$. 

The relationship $r_e \approx 1/d$, which we used for edge-transitive graphs, was derived in Corollaries 8 and 9 in~\cite{vonluxburg2010}
%% As with the edge-transitive graph, we want to show that $r_e \approx 1/d$, and fortunately, this relationship was derived in Corollaries 8 and 9 in~\cite{vonluxburg2010}.
The precise concentration arguments, which have been done before~\cite{sharpnack2012detecting}, lead to the following corollary regarding the performance of the GSS and LESS on random geometric graphs:
\begin{corollary}
Let $G_k$ be a $k$-NN graph with $k/p \rightarrow 0$, $k(k/p)^{2/D} \rightarrow \infty$ and suppose the density $f$ meets the regularity conditions in~\cite{vonluxburg2010}.
Then both the GSS and LESS distinguish $H_0$ from $H_1$ provided that:
\[
\mu = \omega\left(\max\{\sqrt{\rho/k\log p},\log p\}\right)
\]
If $G_\epsilon$ is an $\epsilon$-graph with $\epsilon \rightarrow 0$, $n\epsilon^{D+2}\rightarrow \infty$ then both distinguish $H_0$ from $H_1$ provided that:
\[
\mu = \omega\left(\max \left\{\sqrt{\frac{\rho}{p\epsilon^D} \log p}, \log p\right \}\right)
\]
\end{corollary}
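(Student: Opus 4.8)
The plan is to reduce both statements to a single high-probability bound on the maximum boundary effective resistance $r_\Ccal$ and then invoke Corollary~\ref{cor:main}. That corollary guarantees that the GSS and the LESS distinguish $H_0$ from $H_1$ whenever $\mu = \omega(\max\{\sqrt{r_\Ccal \log p}, \log p\})$, so it suffices to establish, with probability tending to one over the draw of the points $\zb_1,\ldots,\zb_p$, that $r_\Ccal = O(\rho/k)$ for the $k$-NN graph $G_k$ and $r_\Ccal = O(\rho/(p\epsilon^D))$ for the $\epsilon$-graph $G_\epsilon$. Substituting these two bounds into Corollary~\ref{cor:main} yields exactly the claimed SNR thresholds, since $\sqrt{r_\Ccal \log p}$ becomes $\sqrt{(\rho/k)\log p}$ and $\sqrt{(\rho/(p\epsilon^D))\log p}$ respectively, and the $\log p$ term is carried through unchanged.

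First I would reduce $r_\Ccal$ to a uniform per-edge resistance bound. Because we work with undirected, unweighted graphs, $\out(C)$ simply counts the boundary edges of $C$, so the constraint $C \in \Ccal(\rho)$ forces $C$ to have at most $\rho$ such edges. Each boundary edge contributes effective resistance at most $\max_{e \in E} r_e$, whence
\[
r_\Ccal \le \rho \cdot \max_{e \in E} r_e.
\]
This is the same heuristic the paper records after Foster's theorem (``the effective resistance of a cut is $\approx \rho/d_{\textrm{ave}}$''), and it collapses the whole problem to controlling $\max_e r_e$.

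Next I would import the per-edge estimates of \cite{vonluxburg2010}, whose Corollaries 8 and 9 show that, under the stated regularity conditions on $f$ and the scaling hypotheses, the effective resistance of every edge concentrates around the reciprocal of the typical degree $d$. For $G_k$ every vertex has degree $\Theta(k)$ by construction, giving $\max_e r_e = O(1/k)$; for $G_\epsilon$ the expected number of neighbors of a point is $\Theta(p\epsilon^D)$ (the density mass in a Euclidean ball of radius $\epsilon$, scaled by $p$), giving $\max_e r_e = O(1/(p\epsilon^D))$. Combining each with the displayed inequality produces the two target bounds on $r_\Ccal$.

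The main obstacle is the concentration step, not the algebra: the degrees and effective resistances of a random geometric graph are random, and the bound on $r_\Ccal$ must hold \emph{uniformly} over all edges, in particular those near the boundary of $\textrm{supp}(f)$ where local degrees are smallest. The regularity conditions (support topologically a cube, density bounded away from zero) are exactly what prevent degrees from degenerating, and the scaling assumptions $k/p \to 0$, $k(k/p)^{2/D} \to \infty$ (respectively $\epsilon \to 0$, $n\epsilon^{D+2} \to \infty$) are precisely the regimes in which the effective-resistance concentration of \cite{vonluxburg2010} holds with probability approaching one. Since these measure-concentration arguments have already been carried out in essentially the same setting in \cite{sharpnack2012detecting}, I would cite them and then apply a union bound to upgrade a high-probability per-edge estimate to the uniform control of $\max_e r_e$ that the bound on $r_\Ccal$ requires, rather than reproduce the geometric estimates in full.
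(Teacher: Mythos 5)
Your proposal is correct and follows essentially the same route as the paper: the paper's own justification is precisely to bound $r_\Ccal \le \rho \cdot \max_e r_e$, invoke Corollaries 8 and 9 of \cite{vonluxburg2010} for the per-edge estimate $r_e \approx 1/d$ with $d = \Theta(k)$ and $d = \Theta(p\epsilon^D)$ respectively, cite the concentration arguments already carried out in \cite{sharpnack2012detecting}, and substitute into Corollary~\ref{cor:main}. Your write-up is in fact more explicit than the paper's one-line proof about where the uniformity over edges and the boundary-of-support issues enter.
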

The corollary follows immediately form Corollary~\ref{cor:main} and the proofs in~\cite{sharpnack2012detecting}.
Since under the regularity conditions, the maximum degree is $\Theta(k)$ and $\Theta(p\epsilon^D)$ in $k$-NN and $\epsilon$-graphs respectively, the corollary establishes the near optimality (again provided that $\rho/d \le \sqrt{p}$) of both test statistics.

\begin{figure}[t]
\begin{center}
\includegraphics[width=4.5cm]{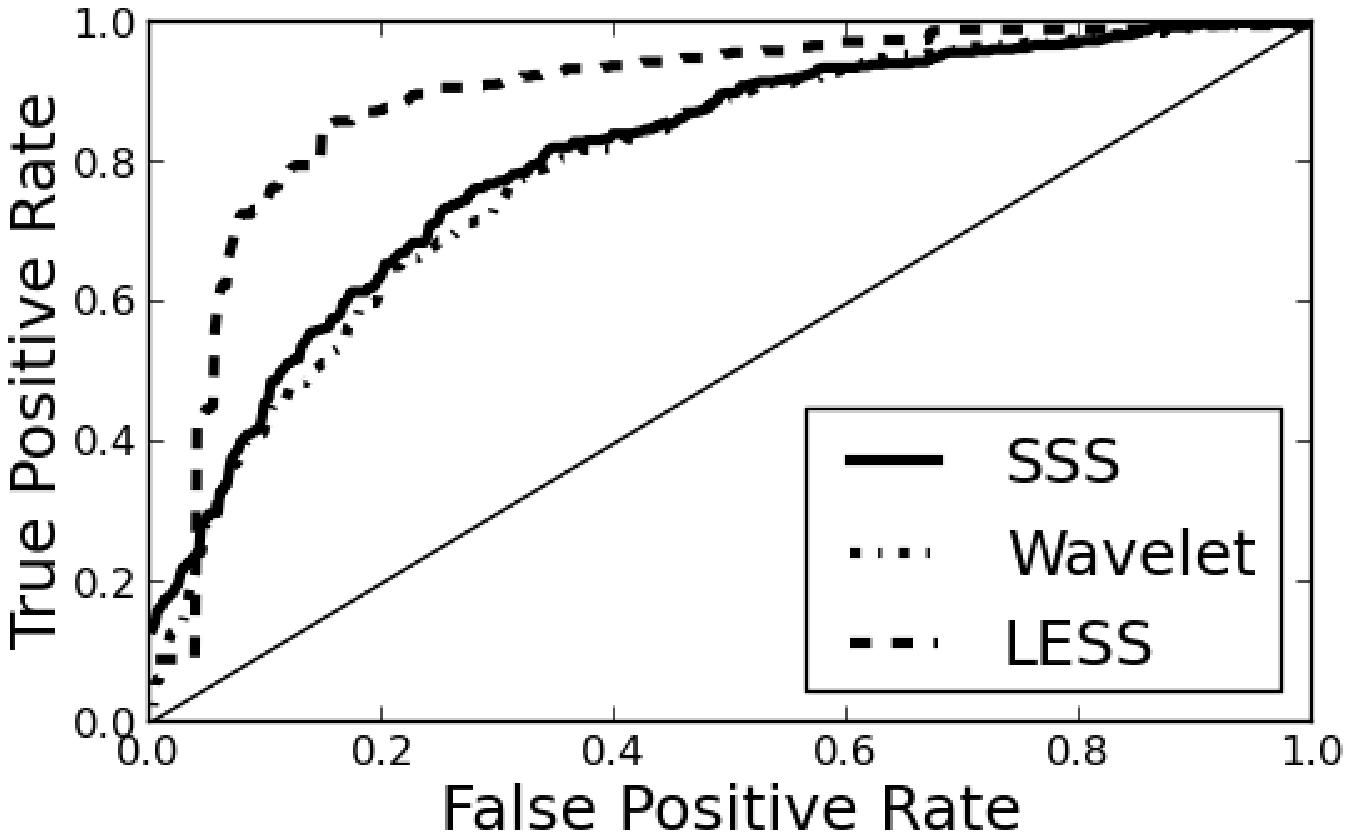}
\includegraphics[width=4.5cm]{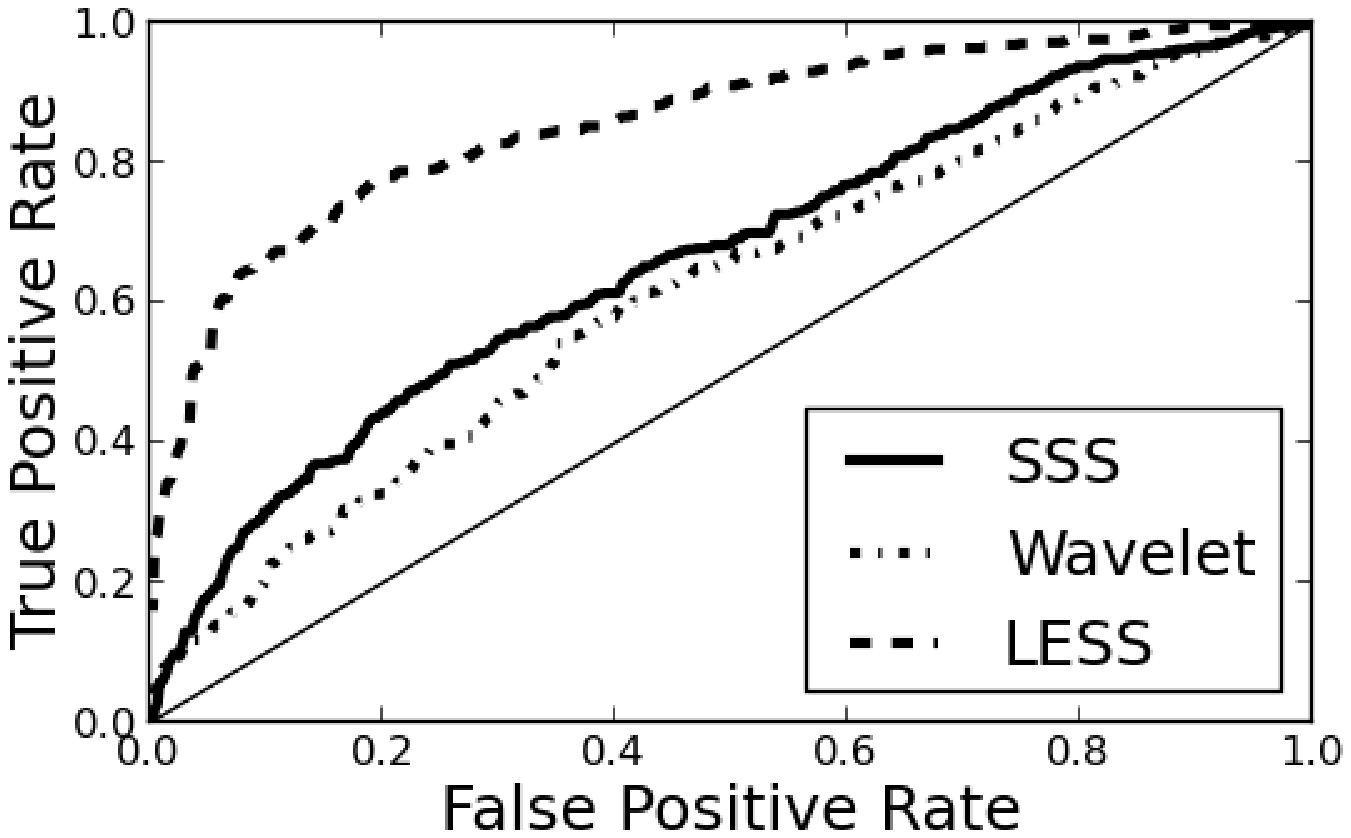}
\includegraphics[width=4.5cm]{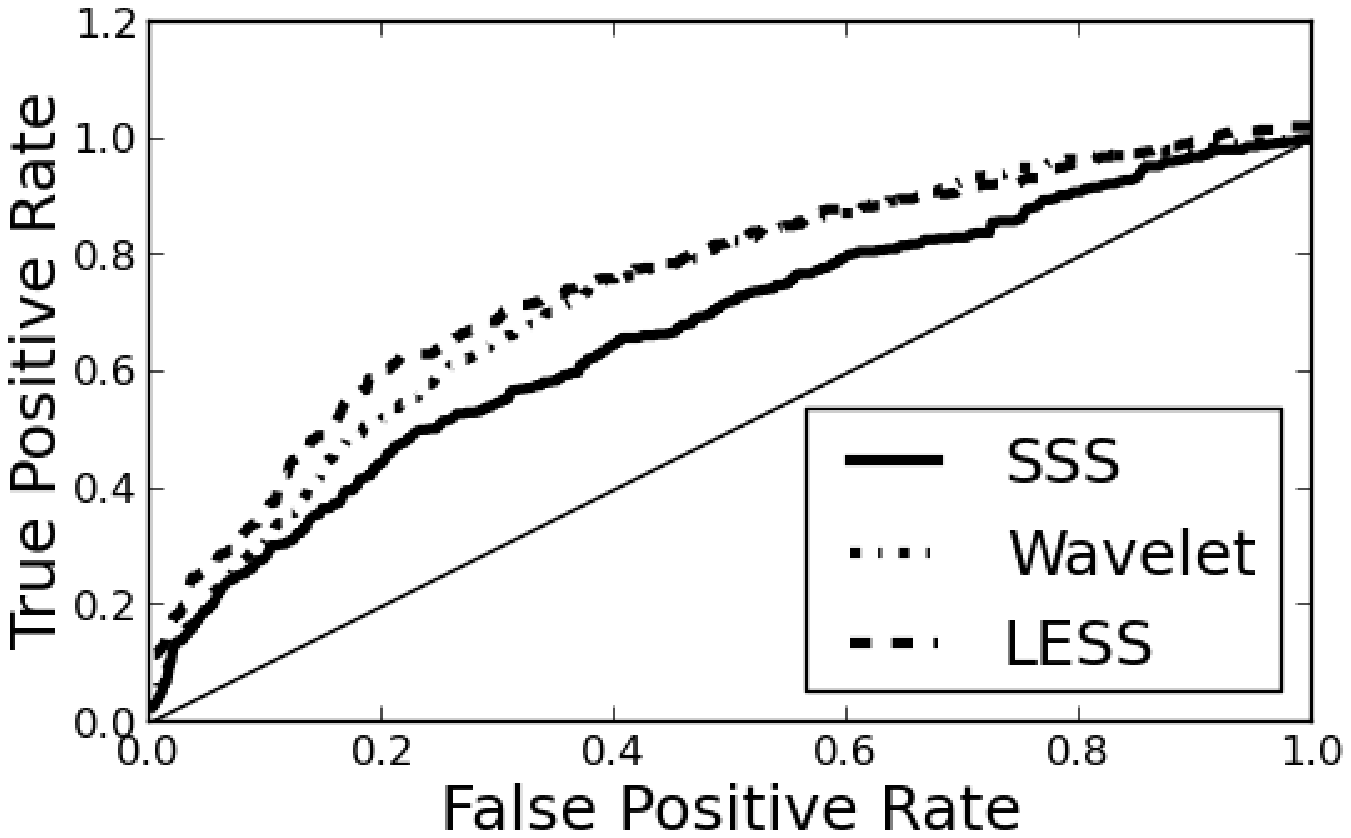}
\end{center}
\caption{A comparison of detection procedures: spectral scan statistic (SSS), UST wavelet detector (Wavelet), and LESS.  The graphs used are the square 2D Torus, kNN graph ($k \approx p^{1/4}$), and $\epsilon$-graph (with $\epsilon \approx p^{-1/3}$); with $\mu = 4,4,3$ respectively, $p = 225$, and $|C| \approx p^{1/2}$.}
\vspace{-.2cm}
\end{figure}
We performed some experiments using the MRF based algorithm outlined in Prop.~\ref{prop:less_alg}.
Each experiment is made with graphs with $225$ vertices, and we report the true positive rate versus the false positive rate as the threshold varies (also known as the ROC.)
For each graph model, LESS provides gains over the spectral scan statistic\cite{sharpnack2012changepoint} and the UST wavelet detector\cite{sharpnack2012detecting}, each of the gains are significant except for the $\epsilon$-graph which is more modest.
\vspace{-.2cm}
\section{Conclusions}
\vspace{-.2cm}
To summarize, while Corollary~\ref{cor:main} characterizes the performance of GSS and LESS in terms of effective resistances, in many specific graph models, this can be translated into near-optimal detection guarantees for these test statistics.
We have demonstrated that the LESS provides guarantees similar to that of the computationally intractable generalized likelihood ratio test (GSS).
Furthermore, the LESS can be solved through successive graph cuts by relating it to MAP estimation in an MRF.
Future work includes using these concepts for localizing the activation, making the program robust to missing data, and extending the analysis to non-Gaussian error.
\vspace{-.2cm}
%This procedure is very flexible as it applies to weighted and directed graphs, which to the best of our knowledge is the first instance of a anomaly detection algorithm that works for directed graphs with statistical guarantees.

%\section{Discussion}

\subsubsection*{Acknowledgments}
This research is supported in part by AFOSR under grant FA9550-10-1-0382 and NSF under grant IIS-1116458. 
AK is supported in part by a NSF Graduate Research Fellowship.
We would like to thank Sivaraman Balakrishnan for his valuable input in the theoretical development of the paper.

%\subsubsection*{References}
{\small
\bibliographystyle{unsrt}
\bibliography{biblio}
}
\section{Appendix}

Let us introduce the following notation: $W(A \rightarrow B)$ is the total weight of edges with a tail in $A$ and a head in $B \backslash A$.

\begin{proposition}
\begin{enumerate}
\item $\out$ is submodular.
\item The Lov\'asz extension of $\out$ is $f(\omega) = \| (\nabla \omega)_+ \|$.
\end{enumerate}
\end{proposition}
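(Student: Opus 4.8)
The plan is to prove the two parts independently, leaning on the fact already noted in the text that the class of submodular functions is closed under addition and nonnegative scalar multiplication. \textbf{For Part 1 (submodularity)} I would first decompose the cut function edge by edge, writing $\out(C) = \sum_{(u,v)\in E} W_{u,v}\, g_{u,v}(C)$ with $g_{u,v}(C) = I\{u \in C\}\,I\{v \in \bar C\}$. Since every $W_{u,v} \ge 0$, by the closure property it suffices to show each single-edge term $g_{u,v}$ is submodular. The key observation is that $g_{u,v}(C)$ depends on $C$ only through the pair $(I\{u\in C\},\, I\{v\in C\})$, and that intersection and union act coordinatewise as $\min$ and $\max$; hence the inequality $g_{u,v}(A)+g_{u,v}(B) \ge g_{u,v}(A\cap B)+g_{u,v}(A\cup B)$ collapses to the single nontrivial inequality on the lattice $\{0,1\}^2$, which reads $1 + 0 \ge 0 + 0$ (the value is $1$ only when the tail is in and the head is out). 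This is immediate. An equivalent route is the diminishing-returns form: for $A \subseteq B$ and $w \notin B$, the marginal gain $\out(S\cup\{w\}) - \out(S)$ is the weight of edges from $w$ leaving $S \cup \{w\}$ minus the weight of edges from $S$ into $w$; the first term only shrinks and the second (subtracted) only grows as $S$ enlarges from $A$ to $B$, giving the required monotonicity.

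\textbf{For Part 2 (Lov\'asz extension)} I would compute $f$ straight from its definition. Fix $\omega \in [0,1]^p$, sort the coordinates as $\omega_{j_1} \ge \cdots \ge \omega_{j_p}$, and set $S_i = \{j_1,\ldots,j_i\}$ with $S_0 = \emptyset$. Applying summation by parts to the telescoping defining sum, and using that $\out(\emptyset)=\out([p])=0$ (no edge leaves the empty set or the whole vertex set), the extension rearranges into the nonnegative combination of cut values $f(\omega) = \sum_{i=1}^{p-1}(\omega_{j_i}-\omega_{j_{i+1}})\,\out(S_i)$. I would then expand each $\out(S_i)$ as a sum over edges and exchange the order of summation. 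For a fixed edge $(u,v)$, its contribution survives exactly for those $i$ with $u \in S_i$ and $v \notin S_i$, i.e. for a contiguous block of ranks between that of $u$ and that of $v$; over this block the differences $\omega_{j_i}-\omega_{j_{i+1}}$ telescope to $(\omega_u - \omega_v)_+$ (and the block is empty, contributing $0$, precisely when $\omega_u \le \omega_v$). This yields the closed form $f(\omega) = \sum_{(u,v)\in E} W_{u,v}(\omega_u - \omega_v)_+$, which one can also read off from the co-area identity $f(\omega)=\int_0^1 \out(\{i : \omega_i \ge t\})\,dt$.

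Finally I would identify this expression with $\|(\nabla\omega)_+\|_1$ (the $\ell_1$ norm appearing in the LESS constraint in \eqref{eq:less}): reading the rows of the incidence matrix, edge $e=(u,v)$ contributes the single entry $\pm W_e(\omega_u-\omega_v)$ to $\nabla\omega$ with the sign fixed by the incidence convention, and, keeping the orientation consistent with the outgoing-cut convention of $\out$, its positive part is $W_e(\omega_u-\omega_v)_+$; summing these positive parts over all edges reproduces exactly the formula above. I expect the bookkeeping in Part 2 --- tracking the sorted order and verifying that the telescoping collapses to the positive part, together with fixing the edge orientation so that the positive part of $\nabla\omega$ matches $\out$ rather than its reverse --- to be the only delicate point; Part 1 and the final identification are routine.
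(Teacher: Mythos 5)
Your proof is correct, and it takes a somewhat different route from the paper on both parts. For Part 1 the paper argues globally: it partitions the edges relevant to a pair $A,B$ into seven weight classes ($w_1,\dots,w_7$ according to which of $A\setminus B$, $A\cap B$, $B\setminus A$, $\overline{A\cup B}$ contains the tail and head), computes both sides of the submodular inequality, and observes that the surplus is exactly $W(A\setminus B \to B\setminus A) + W(B\setminus A \to A\setminus B) \ge 0$. You instead decompose $\out$ into single-edge cut functions $g_{u,v}$ and invoke closure under nonnegative combinations, collapsing everything to one inequality on the lattice $\{0,1\}^2$; the content is the same (your per-edge surpluses, summed, are precisely the paper's $w_4+w_5$), but your version is more modular, and your diminishing-returns alternative is equally valid. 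For Part 2 the paper works directly with the coefficient-of-$x_{j_i}$ form of the extension, $f(\xb)=\sum_i x_{j_i}\left[W(\{j_i\}\to \bar C_i)-W(C_i\to\{j_i\})\right]$, and collects the contribution of each edge, whereas you pass through Abel summation to the level-set form $\sum_{i}(\omega_{j_i}-\omega_{j_{i+1}})\out(S_i)$ and then telescope per edge; both land on $\sum_{(u,v)\in E} W_{u,v}(\omega_u-\omega_v)_+$, and your handling of ties and empty blocks is sound. You are also right to single out the orientation bookkeeping at the end as the one delicate point: with the paper's stated convention $\nabla_{e,v}=-W_e$ at the tail, the entry $(\nabla\omega)_e$ equals $W_e(\omega_{\mathrm{head}}-\omega_{\mathrm{tail}})$, whose positive part corresponds to the reversed cut $W(\bar C\to C)$ rather than $\out(C)$; matching the two requires flipping one sign convention, a step the paper's own proof passes over silently.
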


\begin{proof}

{\em 1.} Let us partition all of the relevant edges: $w_1 = W(A\backslash B \rightarrow \overline {A \cup B}), w_2 = W(A \cap B \rightarrow \overline {A \cup B}), w_3 = W(B\backslash A \rightarrow \overline{A \cup B}), w_4 = W(A\backslash B \rightarrow B \backslash A), w_5 = W(B \backslash A \rightarrow A \backslash B), w_6 = W(A \cap B \rightarrow A \backslash B), w_7 = W(A \cap B \rightarrow B \backslash A)$.
Let us then evaluate $out$,
\[
\begin{aligned}
\out(A) + \out(B) = (w_1 + w_2 + w_4 + w_7) + ( w_3 + w_2 + w_5 + w_6) \\
\ge (w_1 + w_2 + w_3) + (w_2 + w_6 + w_7) = \out(A \cup B) + \out(A \cap B)
\end{aligned}
\]
{\em 2.} Let $f$ be the Lov\'asz extension of $\out$.
Let $\xb \in \RR^p$, and $\{j_i\}_{i = 1}^p$ be such that $x_{j_i} > x_{j_{i + 1}}$.
Furthermore, let $C_i = \{j_k: k > i\}$.
Then, we see that $f$ takes the form,
\[
f(\xb) = \sum_{i=1}^p x_{j_i} [W(\{j_i\} \rightarrow \bar C_i) - W(C_i \rightarrow \{ j_i \}) ]
\]
Let us consider then the components attributable to the edge $(j_i,j_k)$; these are $W_{j_i,j_k} (x_{j_i} I(i < k) - x_{j_k} I(i < k)) = W_{j_i,j_k} (x_{j_i} - x_{j_k})_+$ because there is no contribution if $j_k \notin C_i$.
This gives us our result.
\end{proof}

\begin{proof}[Proof of Proposition \ref{prop:less_alg}]
We begin with the LESS form in \eqref{eq:less},
\[
\hat l = \max_{t \in [p], \xb} \frac{\xb^\top \yb}{\sqrt{t}} \textrm{ s.t. } \xb \in \Xcal(\rho,t) = \{ \xb \in [0,1]^p : \| (\nabla \xb)_+ \|_1 \le \rho, \one^\top \xb \le t \}
\]
Define Lagrangian parameters $\etab \in \RR_+^2$ and the Lagrangian function, $L(\etab, \xb) = \xb^\top \yb - \eta_0 \xb^\top \one - \eta_1 \|(\nabla \xb)_+ \|_1 + \eta_0 t + \eta_1 \rho$ and notice that it is convex in $\etab$ and concave in $\xb$.
Also, the domain $[0,1]^p$ is bounded and each domain of $L$ is non-empty closed and convex.
\[
\begin{aligned}
&\max_{\xb \in [0,1]^p} \inf_{\etab \in \RR_+^2} L(\etab, \xb) = \inf_{\etab \in \RR_+^2} \max_{\xb \in [0,1]^p} L(\etab, \xb)
\end{aligned}
\]
This follows from a saddlepoint result in \cite{rockafellar1997convex} (p.393 Cor.~37.3.2).
All that remains is to notice that $-\xb^\top \yb + \eta_0 \xb^\top \one + \eta_1 \|(\nabla \xb)_+ \|_1$ is the Lov\'asz extension of $-\xb^\top \yb + \eta_0 \xb^\top \one + \eta_1 \out(\xb)$ for $\xb \in \{0,1\}^p$.
Hence, by Proposition \ref{prop:submod}, there exists a minimizer that lies within $\{0,1\}^p$, and so 
\[
\inf_{\etab \in \RR_+^2} \max_{\xb \in [0,1]^p} L(\etab, \xb) = \inf_{\etab \in \RR_+^2} g(\eta_0,\eta_1) + \eta_0 k + \eta_1 \rho
\]
This follows from the fact that $\| (\nabla \xb)_+ \|_1$ is equal to $\out(\xb)$ for $\xb \in \{0,1\}^p$. 
The program $g$ takes the form of a modular term and a cut term, which is solvable by graph cuts \cite{cormen2001introduction}.
\end{proof}

\subsection{Proof of Theorem \ref{thm:main}}
We will begin by establishing some facts about uniform spanning trees (UST).
In a directed graph, a spanning tree is a tree in the graph that contains each vertex such that all the vertices but one (the root) are tails of edges in the tree.
If the directed graph is not connected (i.e.~ there are two vertices such that there is no directed path between them) then we would have to generalize our results to a spanning forest.
We will therefore assume this is not the case, for ease of presentation.
Notice that in the case that we have a weighted graph, then the UST makes the probability of selecting a tree $\Tcal$ proportional to the product of the constituent edge weights.
\begin{lemma}{\cite{fung2010graph}}
\label{lem:UST_conc}
Let $a_e \in [0,1], \forall e \in E$ and let $\Tcal$ be a draw from the UST.  If $Z = \sum_{e \in E} a_e I\{e \in \Tcal\}$, for any $\delta \in (0,1)$,
\[
\PP \{ Z \ge (1 + \delta) \EE Z\} \le \left( \frac{e^\delta}{(1 + \delta)^{1 + \delta}}\right)^{\EE Z}
\]  
\end{lemma}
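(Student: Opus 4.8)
The plan is to prove this via a standard multiplicative Chernoff argument, in which the only nonstandard ingredient is a negative-dependence property of the uniform spanning tree. Writing $X_e = I\{e \in \Tcal\}$, each $X_e$ is a Bernoulli variable whose mean, by the Matrix-Tree theorem, is $p_e = W_e r_e$, so that $\EE Z = \sum_e a_e p_e$. For any $s \ge 0$, Markov's inequality applied to $e^{sZ}$ gives
\[
\PP\{Z \ge (1+\delta)\EE Z\} \le e^{-s(1+\delta)\EE Z}\, \EE\Big[\prod_e e^{s a_e X_e}\Big],
\]
so everything reduces to controlling the joint moment generating function of the $X_e$.

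The key step is to decouple this expectation into a product over edges. The events $\{e \in \Tcal\}$ are \emph{not} independent, but the edge-inclusion indicators of a uniform spanning tree are known to be \emph{negatively associated}: for any collection of coordinatewise-increasing functions, the expectation of their product is at most the product of their expectations. This is a consequence of the fact that the UST is a determinantal (equivalently, strongly Rayleigh) measure, and is exactly the content I would invoke from \cite{fung2010graph}. Since $x \mapsto e^{s a_e x}$ is increasing for $s \ge 0$, negative association yields
\[
\EE\Big[\prod_e e^{s a_e X_e}\Big] \le \prod_e \EE\big[e^{s a_e X_e}\big] = \prod_e \big(1 - p_e + p_e e^{s a_e}\big).
\]

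From here the argument is routine. Because $a_e \in [0,1]$ and $u \mapsto e^{su}$ is convex, $e^{s a_e} \le 1 + a_e(e^s - 1)$, so each factor is at most $1 + p_e a_e(e^s-1) \le \exp\{p_e a_e(e^s-1)\}$; taking the product gives $\exp\{(e^s-1)\EE Z\}$. Substituting back and optimizing by choosing $s = \log(1+\delta)$ (so that $e^s - 1 = \delta$) collapses the exponent to $\EE Z\,[\delta - (1+\delta)\log(1+\delta)]$, which is precisely $(e^\delta/(1+\delta)^{1+\delta})^{\EE Z}$.

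The main obstacle is entirely the negative-association step: the whole proof rests on the fact that spanning-tree edge indicators dominate independent Bernoulli variables in the sense needed to factorize the moment generating function. If one did not wish to cite this, one would have to establish it directly, for instance via the transfer-current theorem exhibiting the $X_e$ as a determinantal process, followed by the Feder--Mihail negative-association result for such processes, which is the genuinely hard part. Everything after the factorization is the textbook Chernoff computation and uses only the convexity bound for the weights $a_e \in [0,1]$.
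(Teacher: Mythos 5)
Your proof is correct. The paper itself gives no proof of this lemma---it is imported wholesale from the cited reference---and your argument is essentially the standard one underlying that result: Markov's inequality on $e^{sZ}$, factorization of the moment generating function via negative association of the UST edge indicators (a consequence of the Feder--Mihail theorem for balanced matroids, or of the determinantal/strongly Rayleigh structure of the UST), the convexity bound $e^{sa_e} \le 1 + a_e(e^s-1)$ to handle weights $a_e \in [0,1]$, and the choice $s = \log(1+\delta)$. You are also right to flag that pairwise negative correlation of edges would not suffice here and that full negative association is the genuinely nontrivial ingredient; that is exactly the content one must take from the literature if not proving it from scratch.
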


This implies that with probability $1 - \alpha$, $Z \le (\sqrt{\EE Z} + \sqrt{\log (1 / \alpha)})^2$ \cite{sharpnack2012detecting}.
Moreover, the probability that an edge is included in $\Tcal$ is its effective resistance times the edge weight, $\PP \{e \in \Tcal \} = W_e r_e$ \cite{lyons2000probability}.

\begin{proof}[Proof of Theorem \ref{thm:main} (1)]

In the following proof, for some class $\Acal \in 2^{[p]}$, let $g(\Acal) = \EE \sup_{A \in \Acal} \frac{\one_A^\top \xi}{\sqrt{|A|}}$ (this is known as a Gaussian complexity).
Furthermore let $\nabla_\Tcal$ be the incidence matrix restricted to the edges in $\Tcal$ (note that this is an unweighted directed graph).
Let $\Ccal(\Tcal) = \{C \subset [p] : \|(\nabla_\Tcal \one_C)_+\|_1 \le (\sqrt{r_\Ccal} + \sqrt{\log 1/\delta})^2 \}$ and $\delta > 0$ then under the UST for any $C$, $\PP_\Tcal \{C \notin \Ccal(\Tcal)\} \le \delta$. (This follows from Lemma \ref{lem:UST_conc}.)
\[
\begin{aligned}
\EE_\xi \sup_{C \in \Ccal} \frac{\xi^\top \one_C}{\sqrt{|C|}} = \EE_\xi \sup_{C \in \Ccal} \EE_\Tcal  \frac{\xi^\top \one_C}{\sqrt{|C|}} \left[ \one\{ C \in \Ccal(\Tcal) \} + \one\{ C \notin \Ccal(\Tcal) \} \right] \\
\le \EE_\xi \sup_{C \in \Ccal} \left[ \EE_\Tcal \one\{ C \in \Ccal(\Tcal) \} \sup_{C' \in \Ccal(\Tcal)}  \frac{\xi^\top \one_{C'}}{\sqrt{|C'|}}
+ \EE_\Tcal \one\{ C \notin \Ccal(\Tcal) \} \sup_{C' \in 2^{[p]}} \frac{\xi^\top \one_C'}{\sqrt{|C'|}} \right]\\
\le \EE_\xi \sup_{C \in \Ccal} \left[ \EE_\Tcal \sup_{C' \in \Ccal(\Tcal)}  \frac{\xi^\top \one_{C'}}{\sqrt{|C'|}}
+ \EE_\Tcal \one\{ C \notin \Ccal(\Tcal) \} \sup_{C' \in 2^{[p]}} \frac{\xi^\top \one_C'}{\sqrt{|C'|}} \right]\\
\le \EE_\xi  \left[ \EE_\Tcal \sup_{C' \in \Ccal(\Tcal)}  \frac{\xi^\top \one_{C'}}{\sqrt{|C'|}}
+ \sup_{C \in \Ccal} \PP_\Tcal \{ C \notin \Ccal(\Tcal) \} \sup_{C' \in 2^{[p]}} \frac{\xi^\top \one_C'}{\sqrt{|C'|}} \right]\\
\le \EE_\Tcal g(\Ccal(\Tcal)) + g(2^{[p]}) \sup_{C \in \Ccal} \PP_\Tcal \{ C \notin \Ccal(\Tcal) \} 
\end{aligned}
\]

For any $\Tcal$, $|\Ccal(\Tcal)| \le (p - 1)^{(\sqrt{r_\Ccal} + \sqrt{\log 1/\delta})^2}$ because $\Tcal$ is unweighted.
By Gaussianity and the fact that $\EE (\one_C^\top \xi / \sqrt{|C|})^2 = 1$,
\[
g(\Ccal(\Tcal)) \le \sqrt{2 \log |\Ccal(\Tcal)|} \le \sqrt{2 (\sqrt{r_\Ccal} + \sqrt{\log 1/\delta})^2 \log (p-1)}
\]
Furthermore, $g(2^{[p]}) \le a \sqrt{p}$ where $a = \sqrt{2 \log 2}$.
Setting $\delta = p^{-1/2}$ we have the following bound on the Gaussian complexity,
\[
g(\Ccal) \le (\sqrt{r_\Ccal} + \sqrt{\frac{1}{2} \log p})\sqrt{2 \log (p-1)} + a
\]
By Cirelson's theorem \cite{ledoux2001concentration}, with probability at least $1 - \alpha$,
\[
\sup_{C \in \Ccal} \frac{\xi^\top \one_C}{\sqrt{|C|}} \le g(\Ccal) + \sqrt{2 \log (1 / \alpha)}
\]
\end{proof}

\begin{proof}[Proof of Theorem \ref{thm:main} (2)]

Let $\Xcal(\Tcal) = \{ \xb \in [0,1]^p : \| (\nabla_\Tcal \xb)_+ \|_1 \le (\sqrt{r_\Xcal} + \sqrt{\log 1 / \delta} )^2 \}$.
It remains the case that, by the previous Lemma \ref{lem:UST_conc}, $\PP \{ \| (\nabla_\Tcal \xb)_+ \|_1 \ge (\sqrt{r_\Xcal} + \sqrt{\log 1 / \delta} )^2 \} \le \delta$ , where $r_\Xcal = \{ \max {\sum_{(j,i) \in E} W_e r_e (x_i - x_j)_+ : \xb \in \Xcal}\}$.
\[
\begin{aligned}
\EE_\xi \hat l = \EE_\xi \sup_{t \in [p],\xb \in \Xcal(\rho,t)} \frac{\xi^\top \xb}{\sqrt{t}} = \EE_\xi \sup_{t \in [p],\xb \in \Xcal(\rho,t)} \EE_\Tcal  \frac{\xi^\top \xb}{\sqrt{t}} \left[ \one\{ \xb \in \Xcal(\Tcal) \} + \one\{ \xb \notin \Xcal(\Tcal) \} \right] \\
\le \EE_\xi \sup_{t \in [p],\xb \in \Xcal(\rho,t)} \left[ \EE_\Tcal \one\{ \xb \in \Xcal(\Tcal) \} \sup_{\xb' \in \Xcal(\Tcal), \one^\top \xb' \le t}  \frac{\xi^\top \xb'}{\sqrt{t}}
+ \EE_\Tcal \one\{ \xb \notin \Xcal(\Tcal) \} \sup_{\xb' \in [0,1]^p, \one^\top \xb' \le t} \frac{\xi^\top \xb'}{\sqrt{t}} \right]\\
\le \EE_\xi \sup_{t \in [p],\xb \in \Xcal(\rho,t)} \left[ \EE_\Tcal \sup_{\xb' \in \Xcal(\Tcal), \one^\top \xb' \le t}  \frac{\xi^\top \xb'}{\sqrt{t}}
+ \EE_\Tcal \one\{ \xb \notin \Xcal(\Tcal) \} \sup_{\xb' \in [0,1]^p, \one^\top \xb' \le t} \frac{\xi^\top \xb'}{\sqrt{t}} \right]\\
\le \EE_\Tcal \EE_\xi \sup_{t \in [p], \xb \in \Xcal(\Tcal), \one^\top \xb \le t}  \frac{\xi^\top \xb}{\sqrt{t}}
+ \sup_{\xb \in \Xcal(\rho)} \PP_\Tcal \{ \xb \notin \Xcal(\Tcal) \} \EE_\xi \sup_{t \in [p], \xb \in [0,1]^p, \one^\top \xb \le t} \frac{\xi^\top \xb}{\sqrt{t}} \\
\end{aligned}
\]
These follow from Jensen's inequality and Fubini's theorem.
\begin{claim}
\[
\EE_\xi \sup_{t \in [p],\xb \in [0,1]^p, \one^\top \xb \le t} \frac{\xi^\top \xb}{\sqrt{t}} \le \sqrt{2 p \log 2}
\]
\end{claim}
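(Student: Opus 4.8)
The plan is to reduce this continuous maximization to the combinatorial Gaussian complexity $g(2^{[p]})$ that was already controlled in the proof of part (1), where it was shown that $g(2^{[p]}) \le \sqrt{2\log 2}\,\sqrt p = \sqrt{2p\log 2}$. The reduction rests on the observation that, for a \emph{fixed} integer $t$, the inner problem $\sup_{\xb \in [0,1]^p,\, \one^\top \xb \le t} \xi^\top \xb$ is a linear program over the polytope $P_t = \{\xb \in [0,1]^p : \one^\top \xb \le t\}$, and a linear objective is maximized at a vertex of $P_t$.

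The first and crucial step is to show that every vertex of $P_t$ is integral, i.e.\ lies in $\{0,1\}^p$. A vertex is pinned down by $p$ linearly independent active constraints drawn from $\{x_i \ge 0\}$, $\{x_i \le 1\}$, and the single budget constraint $\one^\top \xb \le t$. If the budget constraint is slack, all $p$ active constraints are box constraints and the vertex is integral. If it is tight, then $p-1$ box constraints are active, fixing $p-1$ coordinates to $0$ or $1$; since $t$ is an integer, $\one^\top \xb = t$ then forces the remaining coordinate to be an integer in $[0,1]$ as well. Hence $\sup_{\xb \in P_t}\xi^\top \xb = \max_{S \subseteq [p]:\,|S|\le t}\one_S^\top \xi$, the maximum being attained at an indicator vector $\one_S$.

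Next I would carry out the maximization over $t \in [p]$. Writing $M_t = \max_{|S|\le t}\one_S^\top \xi \ge 0$ (the empty set contributes value $0$), for the maximizing $S$ we have $|S|\le t$, so $M_t/\sqrt t = \one_S^\top\xi/\sqrt t \le \one_S^\top\xi/\sqrt{|S|} \le \max_{S\ne\emptyset}\one_S^\top\xi/\sqrt{|S|}$ using $\one_S^\top\xi = M_t \ge 0$; conversely any nonempty $S$ is admissible with $t = |S| \in [p]$. This yields the identity
\[
\sup_{t\in[p]}\ \sup_{\xb\in P_t}\frac{\xi^\top\xb}{\sqrt t} = \max_{S\in 2^{[p]},\, S\ne\emptyset}\frac{\one_S^\top\xi}{\sqrt{|S|}},
\]
whose expectation is exactly $g(2^{[p]})$.

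Finally, I would invoke the bound used in part (1): each normalized statistic $\one_S^\top\xi/\sqrt{|S|}$ is a standard Gaussian, and the maximum of the $2^p$ unit-variance Gaussians indexed by $S \subseteq [p]$ has expectation at most $\sqrt{2\log 2^p}$, giving $g(2^{[p]}) \le \sqrt{2p\log 2}$ and completing the claim. I expect the integrality-of-vertices argument in the second step to be the main obstacle, since it is precisely what justifies passing from the relaxed domain $[0,1]^p$ back to indicator vectors without loss; the remaining steps are bookkeeping together with the already-established Gaussian maximal inequality.
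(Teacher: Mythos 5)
Your proposal is correct and follows essentially the same route as the paper: reduce the inner problem at fixed $t$ to an integral (vertex) solution, identify the resulting double supremum with the combinatorial Gaussian complexity $g(2^{[p]})$, and apply the standard maximal inequality for $2^p$ unit-variance Gaussians. The paper compresses your vertex-integrality argument into the remark that the optimum at fixed $t$ is the top $t$ coordinates of $\xi$, but the substance is identical.
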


We will proceed to prove the above claim.  In words it follows from the fact that solutions to the program are integral by the generic chaining.
\[
\begin{aligned}
\EE_\xi \sup_{t \in [p],\xb \in [0,1]^p, \one^\top \xb \le t} \frac{\xi^\top \xb}{\sqrt{t}} = \EE_\xi \sup_{t \in [p]} \frac 1{\sqrt{t}} \sup_{\xb \in [0,1]^p: \one^\top \xb \le t} \xi^\top \xb \\
= \EE_\xi \sup_{t \in [p]} \frac 1{\sqrt{t}} \sup_{\xb \in \{0,1\}^p: \one^\top \xb \le t} \xi^\top \xb = \EE_\xi \sup_{\xb \in \{0,1\}^p} \frac{\xi^\top \xb}{\| \xb\|} \le \sqrt{2 p \log 2}\\
\end{aligned}
\]
The second equality holds because the solution to the optimization with $t$ fixed is the top $t$ coordinates of $\xi$.
The third equality holds because $\xb \in \{0,1\}^p$ and so $\one^\top \xb$ is integer.  Hence, if $\xb$ is a solution for the objective with $t$ fixed and $\one^\top \xb < t$ then it holds for the objective with $t-1$, and the overall objective is increased.
Thus at the optimum, $\| \xb \| = \sqrt{\one^\top \xb} = \sqrt{t}$.

\begin{claim}
Denote $r = (\sqrt{r_\Xcal} + \sqrt{\frac 12 \log p} )^2$.
For any spanning tree $\Tcal$,
\[
\EE_\xi \sup_{t \in [p], \xb \in \Xcal(\Tcal), \one^\top \xb \le t}  \frac{\xi^\top \xb}{\sqrt{t}} \le \frac{\log (2p)+1}{\sqrt{r \log p} } + 2 \sqrt{r \log p}
\]
\end{claim}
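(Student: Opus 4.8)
The plan is to reduce the continuous, tree-constrained supremum to a one-dimensional integral over the level sets of $\xb$, and then to decouple the size normalization $1/\sqrt t$ from the noise using Cauchy--Schwarz. Write $\hat l_\Tcal = \sup_{t \in [p],\, \xb \in \Xcal(\Tcal),\, \one^\top \xb \le t} \xi^\top \xb / \sqrt t$, so that the claim asserts $\EE_\xi \hat l_\Tcal$ is bounded as stated. For any $\xb \in [0,1]^p$ use the coarea decomposition $\xb = \int_0^1 \one_{C_\lambda}\, d\lambda$, where $C_\lambda = \{v : x_v > \lambda\}$ are nested. Because the Lov\'asz extension integrates the underlying set function across level sets, the two constraints defining $\Xcal(\Tcal) \cap \{\one^\top \xb \le t\}$ become $\int_0^1 \|(\nabla_\Tcal \one_{C_\lambda})_+\|_1\, d\lambda = \|(\nabla_\Tcal \xb)_+\|_1 \le r$ and $\int_0^1 |C_\lambda|\, d\lambda = \one^\top \xb \le t$. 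Writing $k_\lambda = \|(\nabla_\Tcal \one_{C_\lambda})_+\|_1$ for the tree-cut of the $\lambda$-level set, the budget constraint is simply $\int_0^1 k_\lambda\, d\lambda \le r$.

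The crux is the next step. Introduce the normalized sub-process $M_k = \sup\{ (\xi^\top \one_C)_+ / \sqrt{|C|} : \|(\nabla_\Tcal \one_C)_+\|_1 \le k \}$, the Gaussian complexity of tree-sets with cut at most $k$. Since each $C_\lambda$ is admissible for $k = k_\lambda$, we have $\xi^\top \one_{C_\lambda} \le \sqrt{|C_\lambda|}\, M_{k_\lambda}$, and hence $\xi^\top \xb = \int_0^1 \xi^\top \one_{C_\lambda}\, d\lambda \le \int_0^1 \sqrt{|C_\lambda|}\, M_{k_\lambda}\, d\lambda$. Applying Cauchy--Schwarz on $[0,1]$ gives $\int_0^1 \sqrt{|C_\lambda|}\, M_{k_\lambda}\, d\lambda \le \big(\int_0^1 |C_\lambda|\, d\lambda\big)^{1/2} \big(\int_0^1 M_{k_\lambda}^2\, d\lambda\big)^{1/2} \le \sqrt t\, \big(\int_0^1 M_{k_\lambda}^2\, d\lambda\big)^{1/2}$. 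Dividing by $\sqrt t$ makes the size normalization and the supremum over $t$ vanish entirely, leaving the clean bound $\hat l_\Tcal \le \big(\int_0^1 M_{k_\lambda}^2\, d\lambda\big)^{1/2}$, where the remaining supremum is over configurations $\{k_\lambda\}$ subject only to $\int_0^1 k_\lambda\, d\lambda \le r$.

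Finally, control $M_k$ uniformly in $k$. As in the proof of part (1), there are at most $(p-1)^k$ tree-sets with cut at most $k$, so a Gaussian maximal inequality together with Cirelson-type concentration yields a bound of the form $M_k^2 \le 2 k \log(p-1) + c_0$ holding simultaneously for all $k \in \{0,\dots,p-1\}$, where $c_0 = O(\log(2p))$ absorbs the union over the $p$ possible cut values and the fluctuation. Substituting this linear-in-$k$ envelope and using $\int_0^1 k_\lambda\, d\lambda \le r$ collapses the integral to the budget: $\int_0^1 M_{k_\lambda}^2\, d\lambda \le 2\log(p-1)\int_0^1 k_\lambda\, d\lambda + c_0 \le 2 r \log(p-1) + c_0$, where crucially the additive $c_0$ integrates against the unit-mass domain and is not multiplied by $r$. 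Taking expectations with $\EE\sqrt{X} \le \sqrt{\EE X}$ and expanding $\sqrt{2r\log(p-1)+c_0} \le \sqrt{2r\log(p-1)} + c_0/(2\sqrt{2r\log(p-1)})$ recovers both stated terms: the leading $2\sqrt{r\log p}$ and the lower-order $(\log(2p)+1)/\sqrt{r\log p}$, after identifying $r = (\sqrt{r_\Xcal}+\sqrt{\tfrac12\log p})^2$.

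I expect the main obstacle to be the third step: organizing the maximal inequality so that it holds uniformly over all cut levels $k$ and over the induced random envelope $\{k_\lambda\}$, and verifying that the lower-order contribution enters \emph{additively} (producing the decreasing-in-$r$ term $(\log(2p)+1)/\sqrt{r\log p}$) rather than being scaled by the budget $r$. Secondary technical points---justifying the coarea identity for the positive-part tree total variation, handling a possibly negative $\xi^\top \one_{C_\lambda}$ through the positive part built into the definition of $M_k$, and the measurability of $\lambda \mapsto k_\lambda$---are routine but should be stated with care.
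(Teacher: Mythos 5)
Your route is genuinely different from the paper's and is viable. The paper attacks the constrained supremum by weak Lagrangian duality: it picks explicit dual variables $\eta_0=\sqrt{(r/t)\log p}$, $\eta_1=\sqrt{(t/r)\log p}$, uses Proposition 2 to drop to $\{0,1\}^p$, optimizes over $t$ in closed form via $\sup_t at-bt^2=a^2/(4b)$, and only then applies a maximal inequality over sets with tree-cut $k$ and optimizes over $k$. You instead integrate out the continuous relaxation directly: the coarea representation $\xb=\int_0^1\one_{C_\lambda}\,d\lambda$ together with the level-set identity for the Lov\'asz extension turns both constraints into budgets on $\int k_\lambda\,d\lambda$ and $\int|C_\lambda|\,d\lambda$, and Cauchy--Schwarz kills the $1/\sqrt t$ normalization entirely, reducing everything to the same discrete quantity $M_k$ that the paper's proof eventually reaches. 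Your reduction is arguably cleaner and more transparent about why the continuous program costs nothing extra; the paper's duality argument has the advantage of reusing the machinery already set up for Proposition 3 and of making the choice of penalty weights (hence the final constants) completely explicit.

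One quantitative step must be repaired. The envelope $M_k^2\le 2k\log(p-1)+c_0$ with $c_0=O(\log(2p))$ \emph{uniformly in $k$} is false: the union bound and Cirelson give $M_k\le\sqrt{2k\log(p-1)}+\sqrt{2\log(p/\alpha)}$, and squaring produces a cross term of order $\sqrt{k\log p\cdot\log(p/\alpha)}$, which is not $O(\log p)$ for large $k$. The fix is to absorb it via $(\sqrt a+\sqrt b)^2\le 2a+2b$, giving slope $4\log(p-1)$ rather than $2\log(p-1)$; this is exactly why the target's leading term is $2\sqrt{r\log p}=\sqrt{4r\log p}$ and not $\sqrt{2r\log p}$, and the paper's proof performs the identical absorption (its leftover $-\tfrac k2\sqrt{\log p/r}$ term soaks up the cross term when optimizing over $k$). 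With slope $4$ your integration against the budget gives $\int M_{k_\lambda}^2\,d\lambda\le 4r\log(p-1)+O(\log(2p/\alpha))$, and the expansion of the square root plus the high-probability-to-expectation conversion lands on the stated bound. The remaining points you flag (positive parts, measurability of $\lambda\mapsto k_\lambda$, integrality of $k_\lambda$ so the union over $k$ is over $p$ values) are indeed routine.
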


This will follow from weak duality and a clever choice of dual parameters.

\[
\begin{aligned}
\sup_{t \in [p]} \frac 1{\sqrt{t}} \sup_{\xb \in \Xcal(\Tcal), \one^\top \xb \le t}  \xi^\top \xb\\
= \sup_{t \in [p]} \frac 1{\sqrt{t}} \sup_{\xb \in [0,1]^p} \inf_{\eta \ge 0} \xi^\top \xb - \eta_0 \one^\top \xb - \eta_1 \|(\nabla_\Tcal \xb)_+ \|_1 + \eta_0 t + \eta_1 r\\
\le \sup_{t \in [p]} \frac 1{\sqrt{t}} \sup_{\xb \in \{0,1\}^p} \xi^\top \xb - \one^\top \xb \sqrt{\frac{r}{t} \log p} - \|(\nabla_\Tcal \xb)_+ \|_1 \sqrt{\frac{t}{r} \log p} + 2 \sqrt{rt \log p}
\end{aligned}
\]
The above display follows by selecting $\eta_0 = \sqrt{\frac rt \log p}$ and $\eta_1 = \sqrt{\frac tr \log p}$ and using Prop.~\ref{prop:submod}.
\[
\begin{aligned}
= \sup_{k \in [p]} \sup_{\xb \in \{0,1\}^p : \out(\xb) = k} \sup_{t \in [p]} \frac{\xi^\top \xb}{\sqrt t} - \frac{\one^\top \xb}{t} \sqrt{r \log p} - k \sqrt{\frac{1}{r} \log p} + 2 \sqrt{r \log p}\\
\le \sup_{k \in [p]} \sup_{\xb \in \{0,1\}^p : \out(\xb) = k} \frac{(\xi^\top \xb)^2}{4 \| \xb \|^2 \sqrt{r \log p}} - k \sqrt{\frac{1}{r} \log p} + 2 \sqrt{r \log p}\\
\end{aligned}
\]
The above display follows from the fact that for any $a,b > 0$, $\sup_{t \in \RR} a t - b t^2 = a^2 / (4b)$.
We know that with probability at least $1 - \alpha$ for all $k \in [p]$,
\[
\sup_{\xb \in \{0,1\}^p, \out(\xb) = k} \left| \frac{\xi^\top \xb}{\| \xb \|} \right| \le \sqrt{2 k \log p} + \sqrt{2 \log (2p / \alpha)}
\]
So we can bound the above,
\[
\begin{aligned}
\sup_{t \in [p]} \frac 1{\sqrt{t}} \sup_{\xb \in \Xcal(\Tcal), \one^\top \xb \le t}  \xi^\top \xb \le \sup_{k \in [p]} \frac{(\sqrt{2 k \log p} + \sqrt{2 \log (2p / \alpha)})^2}{4 \sqrt{r \log p}} - k \sqrt{\frac{1}{r} \log p} + 2 \sqrt{r \log p}\\
= \sup_{k \in [p]} \frac{\sqrt{k \log (2p / \alpha)}}{\sqrt{r}} - \frac{k}{2} \sqrt{\frac{\log p}{r}} + \frac{\log(2p / \alpha)}{2\sqrt{r \log p}} + 2 \sqrt{r \log p}\\ 
\le \frac{\log (2p / \alpha)}{2 \sqrt{r \log p} } + \frac{\log(2p / \alpha)}{2\sqrt{r \log p}} + 2 \sqrt{r \log p}\\ 
= \frac{\log (2p / \alpha)}{\sqrt{r \log p} } + 2 \sqrt{r \log p}\\ 
\end{aligned}
\]
Any random variable $Z$ that satisfies $Z \le a + b\log(1/\alpha)$ with probability $1 - \alpha$ for any $\alpha > 0$ for $a,b \ge 0$ also satisfies $\EE Z \le a + b$. 
Hence,
\[
\EE_\xi \sup_{t \in [p]} \frac 1{\sqrt{t}} \sup_{\xb \in \Xcal(\Tcal), \one^\top \xb \le t}  \xi^\top \xb \le \frac{\log (2p) + 1}{\sqrt{r \log p} } + 2 \sqrt{r \log p}\\ 
\]
Combining all of these results and using Cirelson's theorem \cite{ledoux2001concentration},
\[
\begin{aligned}
\hat l \le \frac{\log (2p) + 1}{\sqrt{ \left(\sqrt{r_\Xcal} + \sqrt{\frac 12 \log p} \right)^2 \log p} } + 2 \sqrt{\left(\sqrt{r_\Xcal} + \sqrt{\frac 12 \log p} \right)^2 \log p} \\
+ \sqrt{2 \log 2} + \sqrt{2 \log( 1 / \alpha)}
\end{aligned}
\]

All that remains to be show is that $r_\Xcal = r_\Ccal$.
This can be seen by constructing the level sets of $\xb \in [0,1]^p$ and noticing that $\sum_{(i,j) \in E} W_e r_e (x_j - x_i)_+$ is piecewise linear in the levels.
Thus, we can draw a contradiction from the supposition that the levels are not in $\{0,1\}$.
\end{proof}

\begin{proof}[Proof of Corollary \ref{cor:main}]

We will argue that with high probability, under $H_1$ the GSS and LESS are large.
For the analysis of both the GSS and the LESS, let 
\[
\xb^* = \one_C, \quad t^* = |C|
\]
Then both the GSS and LESS are lower bounded by
\[
\frac{\one^\top_C \yb}{\sqrt{|C|}} = \mu + \frac{\one^\top_C \xi}{\sqrt{|C|}} \sim \Ncal(\mu,1)
\]
Hence, under $H_1$, with probability $1 - \alpha$, the GSS and LESS are larger than $\mu - \sqrt{2 \log (1 / \alpha)}$.
The Corollary follows by comparing this to the guarantee in Theorem \ref{thm:main}.
\end{proof}

\end{document}